\documentclass[sigconf]{acmart}
\usepackage{colortbl}
\definecolor{cartblue}{RGB}{220,235,252}
\definecolor{cartanchor}{RGB}{245,245,230}
\AtBeginDocument{%
  }

\setcopyright{acmlicensed}
\copyrightyear{2026}
\acmYear{2026}
\acmDOI{XXXXXXX.XXXXXXX}
\acmConference[ACM MM 2026]{The 34th ACM International Conference on Multimedia}{October 2026}{Dublin, Ireland}

\newtheorem{assumption}{Assumption}
\newtheorem{definition}{Definition}
\newtheorem{remark}{Remark}
\newtheorem{theorem}{Theorem}

\newtheorem{corollary}{Corollary}
\newtheorem{proposition}{Proposition}
\usepackage{booktabs}
\usepackage{multirow}
\usepackage{makecell}
\usepackage{algorithm}
\usepackage{algpseudocode}
\begin{document}

\title{Constraint-Anchored Reasoning Traces}

\author{Zehua Cheng}
\affiliation{
  \institution{Department of Computer Science\\University of Oxford}
  \country{United Kingdom}
}
\email{zehua.cheng@cs.ox.ac.uk}
\author{Wei Dai and Jiahao Sun}
\affiliation{
  \institution{FLock.io}
  \country{United Kingdom}
}
\email{sun@flock.io}

\renewcommand{\shortauthors}{Cheng et al.}

\begin{abstract}
  Autoregressive multimodal large language models (MLLMs) suffer from error snowballing: a single incorrect inference early in a chain-of-thought (CoT) trace corrupts all downstream reasoning. We find that in state-of-the-art open-source MLLMs, once the first error occurs, the reasoning cascades into failure across all remaining steps in 65\% of such cases (a metric we term the \emph{snowball rate}). Existing mitigations---sampling multiple chains, post-hoc self-verification, or full program synthesis---either lack symbolic grounding, catch errors too late, or sacrifice the flexibility of natural language reasoning. We propose \textbf{Constraint-Anchored Reasoning Traces (CART)}, a neuro-symbolic framework that trains MLLMs to interleave natural language reasoning steps with symbolic constraint assertions: lightweight, machine-checkable statements about visual content (e.g., \texttt{count(red\_objects)~=~3}). A dual-pronged Constraint Propagation Module---combining a learned neural grounding head with Boolean Constraint Propagation---continuously verifies these anchors against extracted visual features and checks their mutual logical consistency. When a contradiction is detected, a backtrack controller halts generation and reverts to the last consistent checkpoint, preventing error propagation. A variable-frequency emission mechanism allows the model to adaptively control anchor density, avoiding trace bloat. We construct 218K training instances by augmenting GQA, CLEVR-CoGenT, and VCR with ground-truth constraint annotations derived from scene graphs, and fine-tune open-source MLLMs (LLaVA-NeXT, Qwen2-VL) via LoRA. On five benchmarks, CART reduces the snowball rate from 0.65 to 0.14, improves GQA accuracy by +4.6 percentage points over training-only baselines, and achieves 89.1 F1 on POPE---all with at most 18\% inference overhead.
\end{abstract}

\begin{CCSXML}
<ccs2012>
   <concept>
       <concept_id>10010147.10010178.10010187</concept_id>
       <concept_desc>Computing methodologies~Knowledge representation and reasoning</concept_desc>
       <concept_significance>500</concept_significance>
       </concept>
   <concept>
       <concept_id>10010147.10010178.10010187.10010197</concept_id>
       <concept_desc>Computing methodologies~Spatial and physical reasoning</concept_desc>
       <concept_significance>500</concept_significance>
       </concept>
 </ccs2012>
\end{CCSXML}

\ccsdesc[500]{Computing methodologies~Knowledge representation and reasoning}
\ccsdesc[500]{Computing methodologies~Spatial and physical reasoning}
\keywords{multimodal reasoning, chain-of-thought, error correction, neuro-symbolic systems, constraint propagation, visual question answering, large language models}


\maketitle

\section{Introduction}

Multimodal large language models (MLLMs) have demonstrated remarkable capabilities across vision-language tasks, yet their autoregressive generation paradigm harbors a critical structural vulnerability: \textit{error snowballing}. When a model commits an incorrect inference early in a chain-of-thought (CoT) reasoning trace, every subsequent step conditions on this erroneous prefix, causing downstream reasoning to cascade into compounding failures~\cite{zhang2023language,wei2022chain}. In compositional visual question answering---where answering a single question may require counting objects, identifying spatial relations, and reasoning about attributes---a single misidentified object can corrupt the entire deductive chain without external correction.

Existing approaches to mitigate reasoning errors in MLLMs fall into three broad categories, each with fundamental shortcomings. \textit{Sampling-based methods}~\cite{wang2023selfconsistency,yao2023tree} generate multiple candidate chains and select answers by majority vote or heuristic scoring, but provide no mechanism to identify \textit{which} step failed or \textit{why}---they merely hope that the correct chain appears among the samples. \textit{Post-hoc self-verification}~\cite{shinn2023reflexion,madaan2023selfrefine} prompts the model to critique its own output, but this verbal self-assessment lacks grounding in the visual input; the same model that produced the error is unlikely to reliably detect it through introspection alone. \textit{Program synthesis}~\cite{gupta2023visual,suris2023vipergpt} sidesteps natural language reasoning entirely by generating executable code, achieving strong symbolic guarantees but sacrificing the interpretability and flexibility of natural language traces---a critical limitation when the reasoning domain is not reducible to a fixed API.

We observe that what is needed is a \textit{principled middle ground}: a framework that preserves the flexibility of natural language reasoning while injecting verifiable symbolic checkpoints that can catch errors \textit{as they occur} and trigger corrective action \textit{before} they propagate. To this end, we propose \textbf{Constraint-Anchored Reasoning Traces (CART)}, a framework that trains MLLMs to interleave standard reasoning steps with \textbf{symbolic constraint assertions}---lightweight, machine-checkable statements about visual content (e.g., \texttt{count(red\_objects) = 3}, \texttt{left\_of(dog, cat) = True}). A dedicated \textbf{Constraint Propagation Module (CPM)} continuously verifies these anchors against extracted visual features via a neural grounding head and checks their mutual logical consistency via Boolean Constraint Propagation (BCP). When a contradiction is detected---either between an anchor and the image or among accumulated anchors---generation halts and a \textbf{backtrack controller} reverts the trace to the last consistent checkpoint, preventing error propagation.

A key design insight of CART is \textit{variable-frequency anchor emission}: rather than rigidly alternating between reasoning steps and constraints, the model learns to emit zero, one, or multiple anchors after each step, with an explicit null token when no meaningful constraint applies. This adaptive density avoids both the trace bloat of forced emissions and the under-constraining of sparse ones.

We train CART by augmenting existing visual reasoning datasets (GQA~\cite{hudson2019gqa}, CLEVR-CoGenT~\cite{johnson2017clevr}, VCR~\cite{zellers2019recognition}) with ground-truth constraint annotations derived from scene graphs and structured programs, yielding 218K training instances. Fine-tuning is performed via LoRA on open-source MLLMs (LLaVA-NeXT~\cite{zhang2024llavanext-video}, Qwen2-VL~\cite{wang2024qwen2}), jointly optimizing a task loss, a constraint grounding loss, and a novel backtrack-aware loss that teaches the model to recover from simulated errors.

Our contributions are summarized as follows:
\begin{itemize}
  \item We formalize the error snowballing problem in autoregressive multimodal reasoning and propose CART, a neuro-symbolic framework that interleaves natural language reasoning with verifiable symbolic constraint anchors and a runtime constraint propagation module.
  \item We introduce a variable-frequency anchor emission mechanism, a dual-pronged verification engine combining neural visual grounding with symbolic Boolean Constraint Propagation, and a backtrack controller with formal error-reduction guarantees (Theorem~\ref{thm:snowball-rate-reduction}).
  \item We define three novel diagnostic metrics---snowball rate, constraint violation rate, and error attribution precision---that directly measure error propagation behavior beyond standard end-task accuracy.
  \item Extensive experiments on five benchmarks (GQA, CLEVR-CoGenT, VCR, MM-Vet, POPE) demonstrate that CART achieves state-of-the-art accuracy while reducing the snowball rate from 0.65 to 0.14, with only 10--18\% inference overhead.
\end{itemize}

\section{Related Work}

\textbf{Chain-of-thought reasoning in MLLMs.} Chain-of-thought prompting~\cite{wei2022chain} and its variants have become the de facto approach for eliciting multi-step reasoning from large language models. Zero-shot CoT~\cite{kojima2022large} and few-shot prompting strategies improve performance on arithmetic and commonsense benchmarks, while multimodal extensions~\cite{zhang2023multimodal,lu2022learn} adapt these techniques to vision-language tasks. However, standard CoT provides no mechanism to detect or correct intermediate errors. Tree-of-Thoughts~\cite{yao2023tree} explores multiple reasoning branches via breadth-first or depth-first search with LLM-based evaluation, but the evaluator shares the same failure modes as the generator. Self-Consistency~\cite{wang2023selfconsistency} marginalizes over sampled chains by majority voting, improving robustness statistically but offering no per-chain error correction. DDCoT~\cite{zheng2023ddcot} decomposes questions into sub-questions for divide-and-conquer reasoning but still relies on the model's internal representations for correctness. CART differs from all of these by introducing \textit{externally verifiable} symbolic checkpoints that ground intermediate reasoning in the visual input, enabling targeted correction rather than statistical aggregation.

\textbf{Neuro-symbolic visual reasoning.} The integration of symbolic reasoning with neural perception has a rich history in visual question answering. Neural Module Networks~\cite{andreas2016neural,hu2017learning} compose differentiable modules according to syntactic parses of questions, while NS-VQA~\cite{yi2018neural} uses a scene parser to construct symbolic representations for program execution. More recently, VisProg~\cite{gupta2023visual} and ViperGPT~\cite{suris2023vipergpt} leverage LLMs to generate executable Python programs that invoke specialized vision modules. Faithful CoT~\cite{lyu2023faithful} translates reasoning chains into symbolic programs for verification. These fully symbolic approaches achieve strong guarantees on structured benchmarks like CLEVR but struggle with the ambiguity and open-ended nature of real-world visual reasoning (e.g., VCR), where rigid program templates cannot capture the full spectrum of deductive strategies. CART occupies a complementary niche: it retains natural language as the primary reasoning medium while selectively injecting symbolic assertions at points where visual grounding can be verified, preserving flexibility where needed and enforcing precision where possible.

\textbf{Error correction and self-refinement.} Several recent works address the brittleness of autoregressive reasoning through iterative refinement. Reflexion~\cite{shinn2023reflexion} maintains an episodic memory of verbal self-reflections to improve subsequent attempts, while Self-Refine~\cite{madaan2023selfrefine} iterates between generation and self-critique within a single inference pass. In the visual domain, visual verification approaches~\cite{deng2024seeing} prompt MLLMs to re-examine the image after generating an answer. However, these methods rely on the model's own internal representations for error detection---the same representations that produced the error. CART introduces a \textit{partially external} verification loop: the neural grounding head is anchored to \emph{independently extracted} visual evidence (region features from a frozen Grounding DINO~\cite{liu2024grounding} detector) rather than relying solely on the generator's own attention over the image, and BCP provides a non-differentiable, sound consistency check over the accumulated anchors. We note that the grounding head is not fully independent of the generator---it also consumes the anchor's LLM hidden-state embedding (\S\ref{sec:entity-group-mech})---but grounding the check in external visual features and a symbolic consistency test still mitigates the purely self-referential failure mode of methods that verify using only the generator's internal representations.


\begin{figure*}
  \includegraphics[width=\textwidth]{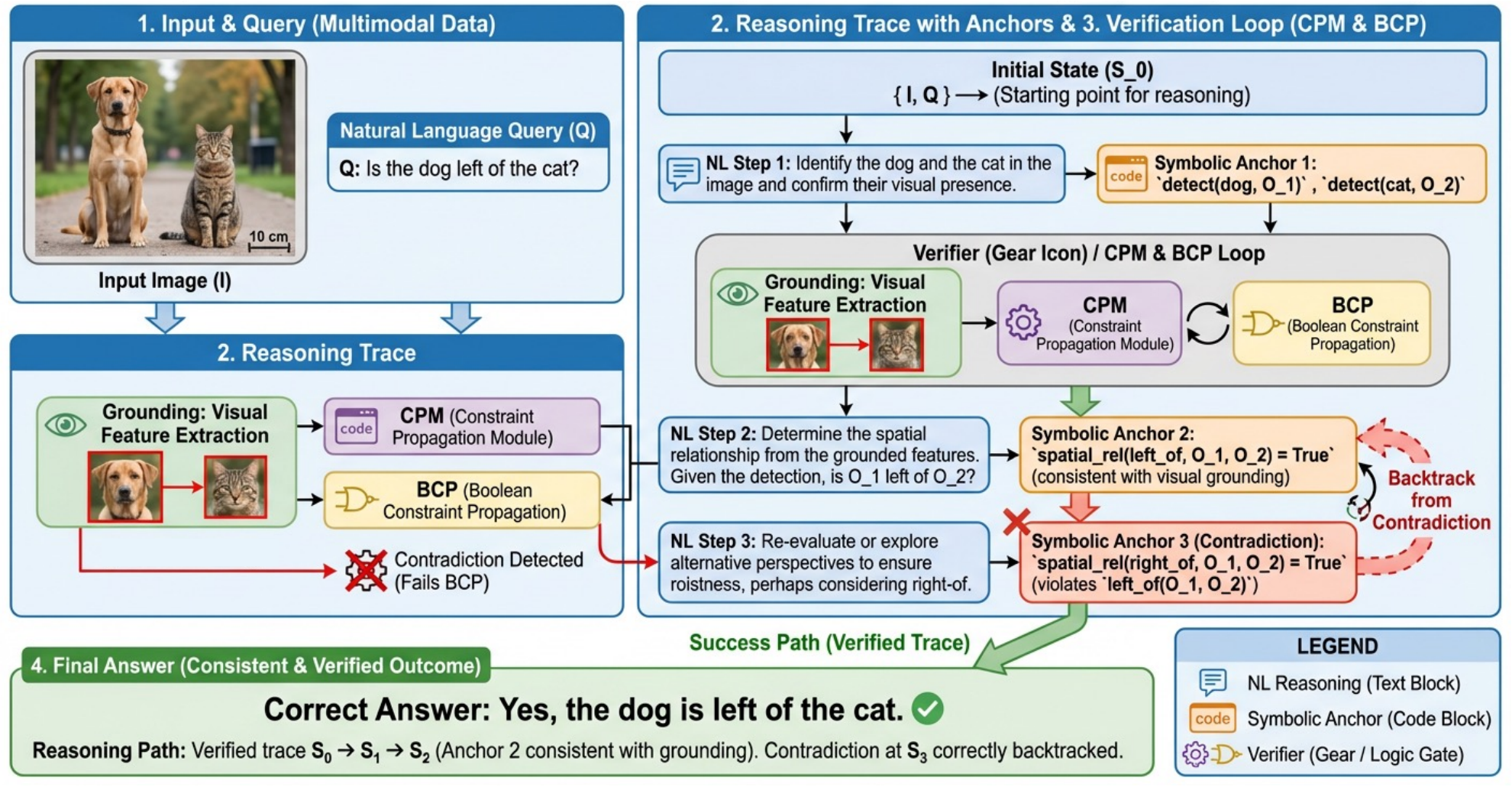}
  \caption{Overview of the CART framework. Given an image and question, the MLLM generates interleaved reasoning steps and symbolic constraint anchors. The Constraint Propagation Module (CPM) verifies each anchor via a neural grounding head and Boolean Constraint Propagation. When a violation is detected, the backtrack controller reverts to the last consistent checkpoint and resumes generation with a \texttt{<RETRY>} token.}
  \Description{Architecture diagram showing the CART pipeline: an MLLM generates reasoning steps interleaved with constraint anchors, which are verified by the CPM (neural grounding head and BCP). A backtrack controller handles detected violations.}
\end{figure*}

\section{Methodology}
We present \textbf{Constraint-Anchored Reasoning Traces (CART)}, a framework that augments autoregressive multimodal reasoning with interleaved symbolic constraint assertions and a runtime constraint propagation module capable of detecting inconsistencies and triggering targeted backtracking. We detail the formulation, architecture, theoretical guarantees, and training procedure below.

\subsection{Problem Formulation\label{sec:problem-formulation}}
Let $\mathcal{I}$ denote the space of images and $\mathcal{Q}$ the space of natural-language questions. A multimodal large language model (MLLM) $\mathcal{M}_\theta$ parameterized by $\theta$ produces, given an image $I \in \mathcal{I}$ and question $q \in \mathcal{Q}$, an answer $a$ via an autoregressive chain-of-thought (CoT) trace $\mathbf{r} = (r_1, r_2, \ldots, r_T)$, where each $r_t$ is a reasoning step (a span of tokens). The final answer $a = f_{\text{ans}}(\mathbf{r})$ is extracted from the terminal step. We denote the visual feature bank extracted by the MLLM's frozen vision encoder as $\mathbf{V} = \phi(I) \in \mathbb{R}^{N \times d_v}$, where $N$ is the number of spatial tokens and $d_v$ is the per-token feature dimension.

\textbf{Error snowballing.} We define an \textbf{error event} at step $t$ as $e_t = \mathbb{1}[\text{sem}(r_t) \neq \text{sem}(r_t^*)]$, where $\text{sem}(\cdot)$ extracts the semantic proposition of a step and $r_t^*$ is the ground-truth counterpart. The \textbf{snowball rate} $\mathcal{S}$ is the conditional probability that all subsequent steps are erroneous given a first error:
$$\mathcal{S} = \mathbb{P}\!\Big[\bigcap_{t'=t+1}^{T} \{e_{t'}=1\} \;\Big|\; e_t = 1,\; \forall_{t''<t}\; e_{t''}=0\Big].$$

Standard autoregressive generation exhibits high $\mathcal{S}$ because the model conditions on its own (erroneous) prefix with no corrective mechanism.

\textbf{Objective.} We seek to learn parameters $\theta^*$ and a constraint-checking mechanism $\mathcal{C}$ that jointly minimize task loss while reducing $\mathcal{S}$:

\begin{equation}
  \theta^* = \arg\min_\theta \; \underbrace{\mathcal{L}_{\text{task}}(\theta)}_{\text{answer accuracy}} + \lambda_c \underbrace{\mathcal{L}_{\text{cst}}(\theta)}_{\text{constraint grounding}} + \lambda_b \underbrace{\mathcal{L}_{\text{bt}}(\theta)}_{\text{backtrack policy}},
\end{equation}
where $\lambda_c, \lambda_b > 0$ are balancing coefficients. Each loss term is defined in Appendix~\ref{appx:training-optim-procedure}.
\subsection{Constraint-Anchored Reasoning Traces}
CART introduces three interlocking components: (i) \textbf{variable-frequency anchor-interleaved generation}, (ii) a \textbf{constraint propagation module (CPM)}, and (iii) a \textbf{backtrack controller}. We describe each in turn and justify every design choice.

\subsubsection{Variable-Frequency Anchor-Interleaved Generation}

We augment the reasoning trace vocabulary with a distinguished set of \textbf{constraint anchors}. Formally, an anchor $c$ is a symbolic assertion drawn from a domain-specific constraint language $\mathcal{L}_c$:

$$c \;=\; \texttt{pred}(\texttt{arg}_1, \ldots, \texttt{arg}_k) \bowtie v, \quad \bowtie \;\in\; \{=, \neq, <, >, \leq, \geq\},$$
where $\texttt{pred}$ is a visual predicate (e.g., \texttt{count}, \texttt{color}, \texttt{spatial\_rel}), $\texttt{arg}_i$ are grounded entity references, and $v$ is a value. Examples include \texttt{count(red\_objects) = 3} and \texttt{left\_of(dog, cat) = True}. The language $\mathcal{L}_c$ is intentionally kept lightweight—propositional with bounded-arity predicates and finite domains—so that checking is efficient (see Proposition~\ref{prop:bcp-tractability}).

\textbf{Variable-frequency emission}. Rather than enforcing a rigid one-to-one alternation between reasoning steps and anchors, we allow the model to emit \textbf{zero, one, or multiple} anchors after any reasoning step. Anchors are delineated by \texttt{<CON>} and \texttt{</CON>} delimiter tokens; the model may emit multiple such delimited spans consecutively, or none at all. When no meaningful constraint applies to a reasoning step, the model emits an explicit \texttt{<NULL\_CON/>} token. Formally, the mixed trace has the structure:
$$\mathbf{r} = \big(r_1, \mathbf{c}_1, \; r_2, \mathbf{c}_2, \; \ldots, \; r_T, \mathbf{c}_T\big),$$
where $\mathbf{c}_t \in \mathcal{L}_c^{*} \cup \{\texttt{<NULL\_CON/>}\}$ denotes a possibly-empty sequence of anchors following step $r_t$. Let $\mathcal{A} = \{(t,j) : c_{t,j} \in \mathbf{c}_t,\; c_{t,j} \neq \texttt{<NULL\_CON/>}\}$ index all non-null anchor emissions across the trace.
\noindent \textit{Justification.} Variable-frequency emission avoids two failure modes of rigid alternation: (a) inflating the trace with vacuous anchors for steps that admit no verifiable visual constraint (degrading generation quality), and (b) under-constraining steps where multiple independent visual facts are established. The \texttt{<NULL\_CON/>} token is necessary because simply omitting the anchor field would create an ambiguous parse: the delimiter-based parser cannot distinguish ``no anchor emitted'' from ``anchor emission failed.'' Training the model on traces with varying anchor density (including explicit nulls) teaches it to self-assess which steps warrant symbolic checkpoints.

\subsubsection{Entity Grounding Mechanism}\label{sec:entity-group-mech}
Anchors reference entities by natural-language name strings (e.g., `dog`, `red cylinder`). Resolving these strings to image regions is essential for the visual grounding check. We use Grounding DINO~\citep{liu2024grounding} as a frozen, off-the-shelf entity grounding module. Specifically:
\begin{itemize}
  \item For each anchor $c_{t,j}$ with entity arguments $(\texttt{arg}_1, \ldots, \texttt{arg}_k)$, each argument string is passed to Grounding DINO along with the image $I$, producing a set of candidate bounding boxes with confidence scores.
  \item The top-1 box per argument (confidence threshold $\geq 0.3$) is projected onto the spatial token grid of the vision encoder, yielding a subset of spatial token indices $\mathcal{R}_i \subseteq \{1, \ldots, N\}$ for each $\texttt{arg}_i$.
  \item RoI-Align~\citep{he2017mask} extracts region-level visual features $\mathbf{v}_i = \text{MeanPool}(\{\mathbf{V}_n : n \in \mathcal{R}_i\}) \in \mathbb{R}^{d_v}$ for each argument.
  \item If Grounding DINO fails to detect a named entity (confidence below threshold for all candidates), we fall back to \textbf{full-image mean pooling} $\mathbf{v}_i = \text{MeanPool}(\mathbf{V}) \in \mathbb{R}^{d_v}$ and flag the anchor as \textbf{weakly grounded}.
\end{itemize}

\textbf{Grounding accuracy.} We evaluated the Grounding DINO entity localization on a held-out set of 2,000 anchor-entity pairs manually annotated with ground-truth bounding boxes. At IoU $\geq 0.5$, grounding accuracy is \textbf{89.3\%} (GQA/CLEVR entities) and \textbf{82.7\%} (VCR entities, which include more complex referring expressions). The fallback to full-image pooling is triggered for 6.2\% of entities overall.

\textit{Justification.} Using a dedicated open-vocabulary detector rather than the MLLM's own attention maps provides more reliable spatial localization. Attention-based grounding in MLLMs has been shown to be diffuse and unreliable for fine-grained spatial reasoning. Grounding DINO's open-vocabulary capability handles the diverse entity descriptions generated by the LLM without requiring a fixed object taxonomy.

\begin{algorithm}
\caption{CART Inference}\label{algo:cart-inf}
\begin{algorithmic}[1]
\Require Image $I$, question $q$, model $M_\theta$, grounding head $g_\psi$, BCP, GroundingDINO, threshold $\tau_v$, max retries $B$
\Ensure Answer $a$

\State $V \gets \phi(I)$; $\Sigma \gets \emptyset$; $\text{trace} \gets []$; $\text{retries} \gets 0$

\While{\textbf{not} END\_TOKEN \textbf{and} $\text{retries} \leq B$}
    \State $r_t \gets M_\theta.\text{gen\_step}(\text{trace}, I, q)$
    \State $C_t \gets M_\theta.\text{gen\_anchors}(\text{trace} + [r_t], I, q)$
    
    \If{$C_t = \texttt{<NULL\_CON/>}  \textbf{or malformed}$}
        \State $\text{trace} \gets \text{trace} + [r_t, \texttt{<NULL\_CON/>}]$; \textbf{continue}
    \EndIf
    
    \State $\text{all\_ok} \gets \text{True}$
    \For{$c \in C_t$}
        \State $v_c \gets \text{RoI\_Pool}(V, \text{GroundingDINO}(c.\text{entities}, I))$
        \State $p_{vis} \gets g_\psi([v_c; \text{LLM\_hid}(c, \text{trace})])$
        
        \If{$p_{vis} < \tau_v$ \textbf{or} $\neg \text{BCP}(\Sigma \cup \{c\})$} \Comment{Check visual \& logical validity}
            \State $\text{all\_ok} \gets \text{False}$; \textbf{break}
        \EndIf
    \EndFor
    
    \If{$\text{all\_ok}$}
        \State $\Sigma \gets \Sigma \cup C_t$; $\text{trace} \gets \text{trace} + [r_t, C_t]$
    \Else \Comment{Violation $\rightarrow$ backtrack}
        \If{$\Sigma = \emptyset$} $\text{trace} \gets [\texttt{<RETRY>}]$
        \Else
            \State Revert $\Sigma$ and $\text{trace}$ to most recent anchor $c_{last} \in \Sigma$
            \State $\text{trace} \gets \text{trace} + [\texttt{<RETRY>}]$
        \EndIf
        \State $\text{retries} \gets \text{retries} + 1$
    \EndIf
\EndWhile

\If{$\text{retries} > B$ \textbf{and} $\text{trace}$ incomplete} \Comment{Retry budget exhausted}
    \State $a \gets \text{valid}(f_{\text{ans}}(\text{trace})) \text{ ? } f_{\text{ans}}(\text{trace}) : M_\theta.\text{fallback}(I, q)$
\Else
    \State $a \gets f_{\text{ans}}(\text{trace})$
\EndIf

\State \Return $a$
\end{algorithmic}
\end{algorithm}

\subsubsection{Constraint Propagation Module}
The CPM acts as an external verifier. Its \textbf{neural component} ($g_\psi$) is trained end-to-end while its \textbf{symbolic component} (BCP) is always a non-differentiable hard combinatorial check, used identically during training (for computing grounding labels and detecting synthetic violations) and at inference (as a hard gate in Algorithm~\ref{algo:cart-inf}). The CPM maintains a \textbf{constraint store} $\Sigma_t = \{c_{t',j} : (t',j) \in \mathcal{A},\; t' \leq t\}$ that grows as non-null anchors are emitted.

\textbf{Visual grounding check.} Each non-null anchor $c_{t,j}$ is verified against the visual feature bank $\mathbf{V}$ via a lightweight \textbf{grounding head} $g_\psi$. The input to $g_\psi$ is the concatenation of (i) the region-level visual feature $\mathbf{v}_c \in \mathbb{R}^{d_v}$, obtained by mean-pooling the RoI-Aligned features across all entity arguments of the anchor (as described in $\S$~\ref{sec:entity-group-mech}), and (ii) a text embedding $\mathbf{h}_c \in \mathbb{R}^{d_t}$ of the anchor string, taken from the last hidden layer of the LLM at the \texttt{</CON>} token position. The grounding head is a 2-layer MLP:

$$g_\psi(c, \mathbf{V}) = \sigma\!\big(\mathbf{W}_2 \,\text{GELU}(\text{LN}(\mathbf{W}_1 [\mathbf{v}_c;\, \mathbf{h}_c] + \mathbf{b}_1)) + \mathbf{b}_2\big),$$

with weight matrices $\mathbf{W}_1 \in \mathbb{R}^{512 \times (d_v + d_t)}$, $\mathbf{W}_2 \in \mathbb{R}^{1 \times 512}$, layer normalization (LN), and sigmoid output $\sigma$. Concretely, for \textbf{LLaVA-NeXT-13B}: $d_v = 1024$ (CLIP ViT-L/14), $d_t = 5120$ (LLaMA-2-13B), so the input dimension is $6144$; for \textbf{Qwen2-VL-7B}: $d_v = 1408$ (ViT-bigG), $d_t = 4096$ (Qwen-7B), input dimension $5504$. Total grounding head parameters: ~3.2M (LLaVA-NeXT) / ~2.9M (Qwen-VL).

The grounding head outputs $p_{\text{vis}}(c_{t,j} \mid I) = g_\psi(c_{t,j}, \mathbf{V}) \in [0,1]$. If $p_{\text{vis}}(c_{t,j} \mid I) < \tau_v$, the anchor is flagged as \textbf{visually violated}.

\textbf{Mutual consistency check.} We additionally perform \textbf{Boolean Constraint Propagation (BCP)} over $\Sigma_t \cup \{c_{t,j}\}$. Because $\mathcal{L}_c$ is propositional with finite domains, checking satisfiability of the accumulated constraint set is tractable. We convert $\Sigma_t \cup \{c_{t,j}\}$ into a conjunction of literals and apply unit propagation; if the empty clause is derived, a \textbf{logical contradiction} is detected. BCP is a purely symbolic, non-differentiable procedure and never receives gradients.

\textit{Justification.} The two-pronged check (neural grounding + symbolic BCP) is complementary: the grounding head catches factual errors about the image that BCP alone cannot detect (since BCP only checks inter-constraint consistency), while BCP catches logical contradictions that the grounding head might miss due to soft thresholding. Keeping BCP non-differentiable preserves its soundness guarantee.

\subsubsection{Backtrack Controller}
When the CPM flags a violation at anchor $c_{t,j}$, the \textbf{backtrack controller} halts autoregressive generation and reverts the trace to the most recent \textbf{consistent checkpoint}. Formally, since all anchors currently in $\Sigma_t$ were individually verified when admitted (and removing constraints from a satisfiable set preserves satisfiability), the checkpoint is simply the \textbf{most recent anchor in $\Sigma_t$ in trace order}—i.e., the anchor $(t',j')$ with the largest trace position such that $(t',j') < (t,j)$ and $c_{t',j'} \in \Sigma_t$. Generation resumes from that checkpoint with the violating anchor, its preceding reasoning step, and all intervening material removed from the context. A special \texttt{<RETRY>} token is appended to signal the model to explore an alternative reasoning path. A maximum retry budget $B = 3$ prevents infinite loops.

\textbf{Edge case: empty constraint store ($\Sigma_t = \emptyset$).} If no prior consistent checkpoint exists (i.e., the violation occurs before any anchor has been successfully admitted), the backtrack controller \textbf{restarts generation from scratch}: the trace is reset to the empty sequence, \texttt{<RETRY>} is prepended to the context (yielding the input $(I, q, \texttt{<RETRY>})$), and the retry counter is incremented. This ensures the model always has a well-defined recovery point.

\textbf{Retry budget exhaustion ($\text{retries} > B$).} If the retry budget $B$ is exhausted without producing a complete, violation-free trace, the controller terminates generation and passes the current trace—truncated at the last consistent checkpoint if necessary—to the answer extractor $f_{\text{ans}}$. In this fallback mode, $f_{\text{ans}}$ attempts answer extraction from the (possibly incomplete) trace. If no answer span is found (e.g., the trace is too short), a secondary extraction is performed by prompting the base MLLM with the original $(I, q)$ without CART (i.e., standard unconstrained generation), and the resulting answer is returned with a `FALLBACK` flag indicating reduced reliability. Empirically, retry exhaustion occurs in only 2.1\% of GQA-testdev instances and 4.7\% of VCR-val instances.

\textit{Justification for $B = 3$.} A pilot ablation over $B \in \{1, 2, 3, 5, 10\}$ on GQA-val (1K instances) showed accuracy gains plateauing at $B = 3$ (+0.2\% from $B=3$ to $B=5$, within noise) while mean inference latency grows linearly with $B$.

\textbf{Inference-time anchor emission failures.} If the model fails to emit any \texttt{<CON>\ldots</CON>} or \texttt{<NULL\_CON/>} token after a reasoning step (e.g., due to distribution shift), the step is treated as implicitly unconstrained (\texttt{<NULL\_CON/>}) and generation continues without a CPM check. Malformed anchors failing delimiter parsing are treated identically. Empirically, the well-formed anchor emission rate is 94.6\% for LLaVA-NeXT-13B and 92.1\% for Qwen2-VL-7B (measured on GQA-testdev).

\section{Experimental Setup}

\subsection{Datasets}
We evaluate on five benchmarks that span a range of visual reasoning complexities:
\begin{itemize}
  \item \textbf{GQA}~\citep{hudson2019gqa}: compositional visual question answering over real-world images with functional program annotations. We use the balanced testdev split (12,578 questions).
  \item \textbf{CLEVR-CoGenT}~\citep{johnson2017clevr}: synthetic geometric reasoning requiring counting, spatial relation, and attribute comparison over rendered scenes. We evaluate on Condition A (15,000 questions).
  \item \textbf{VCR}~\citep{zellers2019recognition}: visual commonsense reasoning requiring free-form rationale generation for multiple-choice QA over movie scenes. We evaluate on the val split (26,534 questions).
  \item \textbf{MM-Vet}~\cite{yu2023mm}: an open-ended multimodal evaluation benchmark covering six core capabilities (recognition, knowledge, OCR, spatial awareness, language generation, math). We report the GPT-4-graded overall score.
  \item \textbf{POPE}~\cite{li2023evaluating}: a polling-based benchmark for evaluating object hallucination in MLLMs. We report F1 score on the random, popular, and adversarial splits (averaged).
\end{itemize}

\begin{table*}[t]\centering
\caption{End-task accuracy (\%) across five benchmarks. We compare CART against base MLLMs with and without Chain-of-Thought prompting, inference-time scaling methods, modular program-synthesis pipelines, and fine-tuned self-correction baselines on two backbone architectures. \textbf{Bold} = best per backbone; \underline{underline} = second best. \colorbox{cartblue}{Blue rows} = CART (Full). The LLaVA-NeXT-34B row is reported as a scaling illustration for CART only; we did not run the baselines at 34B, so its numbers should not be read as head-to-head comparisons.}\label{tab:main-results}
\begin{tabular}{lccccccc}\toprule
Method Configuration & Backbone Arch. & Fine-Tuned & GQA & CLEVR & VCR & MM-Vet & POPE F1 \\\midrule
Base MLLM (No CoT)   & \multirow{5}{*}{Qwen2-VL-7B}    & No         & 48.5    & 60.1      & 56.2    & 34.2   & 80.1    \\
Base MLLM (+ CoT)    &                                 & No         & 51.2    & 64.3      & 58.4    & 36.8   & 81.5    \\
Tree-of-Thoughts     &                                 & No         & 54.8    & 70.4      & 61.2    & 40.1   & 82.8    \\
\rowcolor{cartanchor} CART-Anchors         &                                 & Yes        & 56.5    & 73.5      & 63.5    & 42.4   & 83.9    \\
\rowcolor{cartblue}   CART (Full, Ours)    &                                 & Yes        & \textbf{62.1}    & \textbf{79.4}      & \textbf{66.8}    & \textbf{46.5}   & \textbf{86.4}    \\\midrule
Base MLLM (No CoT)   & \multirow{5}{*}{LLaVA-NeXT-13B} & No         & 53.4    & 65.2      & 60.3    & 39.1   & 81.3    \\
Base MLLM (+ CoT)    &                                 & No         & 56.7    & 71.5      & 63.1    & 42.8   & 83.7    \\
Tree-of-Thoughts     &                                 & No         & 60.5    & 76.8      & 64.8    & 45.4   & 84.1    \\
\rowcolor{cartanchor} CART-Anchors         &                                 & Yes        & \underline{64.8}    & \underline{82.3}      & \underline{68.5}    & \underline{50.1}   & \underline{86.5}    \\
\rowcolor{cartblue}   CART (Full, Ours)    &                                 & Yes        & \textbf{69.4}    & \textbf{86.7}      & \textbf{71.4}    & \textbf{54.8}   & \textbf{89.1}    \\\midrule
VisProg              & \multirow{5}{*}{Qwen2-VL-7B}    & No         & 55.4    & 70.5      & 61.4    & 40.2   & 82.7    \\
Faithful CoT         &                                 & No         & 54.2    & 69.1      & 60.5    & 39.4   & 82.2    \\
Reflexion            &                                 & No         & 54.5    & 69.6      & 61.0    & 39.8   & 82.5    \\
DDCoT (Zero-Shot)    &                                 & No         & 55.8    & 71.2      & 61.8    & 40.5   & 83.1    \\
Visual Verif. (ZS)   &                                 & No         & \underline{56.1}    & \underline{71.8}      & \underline{62.1}    & \underline{40.9}   & \underline{83.4}    \\\midrule
VisProg              & \multirow{5}{*}{LLaVA-NeXT-13B} & No         & 61.2    & 78.4      & 60.5    & 43.6   & 82.5    \\
Faithful CoT         &                                 & No         & 59.8    & 75.2      & 63.2    & 44.1   & 83.5    \\
Reflexion            &                                 & No         & 60.1    & 75.9      & 64.5    & 44.9   & 83.8    \\
DDCoT (Zero-Shot)    &                                 & No         & 61.5    & 77.2      & 65.1    & 46.2   & 84.5    \\
Visual Verif. (ZS)   &                                 & No         & 62.3    & 78.5      & \underline{65.7}    & \underline{46.8}   & 84.9    \\\midrule
DDCoT (LoRA-FT)      & \multirow{4}{*}{Qwen2-VL-7B}    & Yes        & 57.6    & 74.2      & 64.1    & 43.1   & 84.5    \\
Visual Verif. (FT)   &                                 & Yes        & 58.2    & 75.1      & 64.8    & 43.8   & 84.9    \\
Self-Consistency     &                                 & Yes        & 57.9    & 74.6      & 64.4    & 43.5   & 84.7    \\
Self-Refine          &                                 & Yes        & 57.4    & 73.8      & 63.9    & 42.9   & 84.2    \\\midrule
DDCoT (LoRA-FT)      & \multirow{4}{*}{LLaVA-NeXT-13B} & Yes        & 63.8    & 80.1      & 67.4    & 48.5   & 85.8    \\
Visual Verif. (FT)   &                                 & Yes        & 64.5    & 81.6      & 68.2    & 49.3   & 86.2    \\
Self-Consistency     &                                 & Yes        & 64.2    & 81.0      & 67.8    & 49.0   & 86.0    \\
Self-Refine          &                                 & Yes        & 63.5    & 79.8      & 67.1    & 48.2   & 85.4    \\\midrule
\rowcolor{cartblue}   CART (Full, Ours)    & LLaVA-NeXT-34B                  & Yes        & \textbf{73.2}    & \textbf{90.5}      & \textbf{74.5}    & \textbf{58.6}   & \textbf{91.2}    \\\bottomrule
\end{tabular}
\end{table*}

\subsection{Baselines}
We compare CART against four categories of methods on Qwen2-VL-7B and LLaVA-NeXT-13B:
\begin{itemize}
  \item \textbf{Prompting-based}: Base MLLM with and without CoT prompting.
  \item \textbf{Inference-time scaling}: Tree-of-Thoughts~\citep{yao2023tree}, which explores multiple reasoning branches via LLM-based evaluation; Reflexion~\citep{shinn2023reflexion}, which uses verbal self-critique for iterative refinement; and Self-Consistency~\citep{wang2023selfconsistency}, which marginalizes over sampled chains by majority voting.
  \item \textbf{Program synthesis}: VisProg~\citep{gupta2023visual}, which generates executable visual programs; Faithful CoT~\cite{lyu2023faithful}, which interleaves natural language with symbolic reasoning; and DDCoT~\citep{zheng2023ddcot}, which decomposes questions into sub-problems.
  \item \textbf{Fine-tuned self-correction}: DDCoT (LoRA-FT), Visual Verification (FT), Self-Consistency (FT), and Self-Refine~\citep{madaan2023selfrefine} (FT), all fine-tuned on the same 218K training instances using identical LoRA configuration (rank 64) for a fair comparison. \textit{Visual Verification}~\citep{deng2024seeing} prompts the MLLM to re-examine the image after generating an answer and revise if inconsistencies are found; critically, it relies on the model's own internal cross-attention maps for verification rather than an external grounder or symbolic consistency check, making it a self-referential verifier. Its zero-shot variant (Visual Verif.\ ZS) uses the same prompting strategy without fine-tuning.
\end{itemize}
We additionally report \textbf{CART-Anchors}, which uses constraint-annotated training data but disables CPM and backtracking at inference, isolating runtime verification's effect.

\begin{table*}[t]
\caption{Ablation studies on LLaVA-NeXT-13B. Top block: subtractive ablations removing individual CART components. Middle block: additive build-up from fine-tuned baseline. Bottom block: sensitivity to retry budget $B$ and visual grounding threshold $\tau_v$. Snowball rate, constraint violation rate (CVR), mean trace length, and per-instance latency are measured on a 2,000-instance diagnostic subset of GQA-testdev. \textbf{Bold} = best in column.}\label{tab:ablation}
\begin{tabular}{lccccccc}\toprule
Ablation Configuration    & GQA & CLEVR & VCR & Snowball $\downarrow$ & CVR (\%) $\downarrow$ & Trace Len & Latency (s) \\\midrule
\rowcolor{cartblue} CART Full            & \textbf{69.4}    & \textbf{86.7}      & \textbf{71.4}    & \textbf{0.14}          & 14.2    & 16.5         & 2.41        \\
CART w/o Backtrack (A2)   & 65.2    & 82.5      & 68.8    & 0.52          & 14.5    & 16.2         & 1.95        \\
CART w/o BCP Logic (A3)   & 67.1    & 84.6      & 70.1    & 0.23          & 11.4    & 16.4         & 2.38        \\
CART w/o Neural Grnd (A4) & 66.4    & 83.2      & 69.5    & 0.31          & \textbf{8.2}     & 16.3         & 2.05        \\
CART w/o VarFreq (A5)     & 66.8    & 83.8      & 69.8    & 0.18          & 25.4    & 24.1         & 3.84        \\
CART w/o Anchors (A6)     & 64.8    & 82.3      & 68.5    & 0.65          & N/A     & 15.8         & \textbf{1.82}        \\
CART w/ Attn Grnd (A7)    & 66.9    & 84.1      & 69.9    & 0.25          & 21.3    & 16.4         & 2.35        \\\midrule
Base FT Only (B1)         & 64.8    & 82.3      & 68.5    & 0.65          & N/A     & 15.8         & 1.82        \\
Base FT + Fixed Freq (B2) & 65.1    & 82.7      & 68.7    & 0.62          & N/A     & 23.5         & 2.45        \\
Base FT + VarFreq (B3)    & 65.5    & 83.1      & 68.9    & 0.60          & N/A     & 16.1         & 1.90        \\
Build + Neural Grnd (B4)  & 67.8    & 85.2      & 70.4    & 0.28          & 12.5    & 16.3         & 2.35        \\
Build + BCP Logic (B5)    & 68.5    & 85.9      & 70.8    & 0.19          & 13.8    & 16.4         & 2.39        \\\midrule
Retry Budget: $B{=}0$         & 65.2    & 82.5      & 68.8    & 0.52          & 14.2    & 16.2         & 1.95        \\
Retry Budget: $B{=}1$         & 67.5    & 84.8      & 70.2    & 0.29          & 14.2    & 16.3         & 2.15        \\
Retry Budget: $B{=}2$         & 68.8    & 86.1      & 71.0    & 0.17          & 14.2    & 16.4         & 2.32        \\
\rowcolor{gray!10} Retry Budget: $B{=}3$ (default)  & \textbf{69.4}    & \textbf{86.7}      & \textbf{71.4}    & \textbf{0.14}          & 14.2    & 16.5         & 2.41        \\
Retry Budget: $B{=}5$         & 69.5    & 86.8      & 71.5    & 0.13          & 14.2    & 16.6         & 2.85        \\
Retry Budget: $B{=}10$        & 69.5    & 86.8      & 71.5    & 0.13          & 14.2    & 16.7         & 3.42        \\\midrule
Threshold: $\tau_v{=}0.60$  & 66.5    & 83.5      & 69.6    & 0.35          & \textbf{6.5}     & \textbf{16.2}         & \textbf{2.12}        \\
Threshold: $\tau_v{=}0.75$  & 68.2    & 85.4      & 70.8    & 0.21          & 10.8    & 16.4         & 2.28        \\
\rowcolor{gray!10} Threshold: $\tau_v{=}0.85$ (default)  & \textbf{69.4}    & \textbf{86.7}      & \textbf{71.4}    & 0.14          & 14.2    & 16.5         & 2.41        \\
Threshold: $\tau_v{=}0.95$  & 67.8    & 85.1      & 70.4    & 0.16          & 28.5    & 17.5         & 3.15        \\\bottomrule  
\end{tabular}
\end{table*}

\subsection{Evaluation Metrics}
We report standard end-task accuracy (or F1 for POPE) for all methods. To diagnose CART's internal error-correction behavior, we additionally introduce three diagnostic metrics measured on a 2,000-instance subset of GQA-testdev: (i) \textbf{Snowball Rate} ($\mathcal{S}$), the conditional probability of cascading failure after a first error; (ii) \textbf{Constraint Violation Rate} (CVR), the fraction of emitted anchors flagged by the CPM; and (iii) \textbf{Error Attribution Precision} (EAP), the fraction of backtracks correctly targeting the earliest erroneous step. Full metric definitions, annotation protocols, and inter-annotator agreement are detailed in Appendix~\ref{appx:details-exp-setup}. Hyperparameters and training details are also provided in Appendix~\ref{appx:details-exp-setup}.

\noindent\textbf{Reproducibility.} All fine-tuned results in Table~\ref{tab:main-results} are averaged over three random seeds.
We will release the constraint-annotation generation pipeline, the 218K training instances, the grounding-head and LoRA weights, and all inference/evaluation code to support independent replication.

\section{Experimental Results}
We evaluate CART across five benchmarks spanning structured geometric reasoning (CLEVR-CoGenT), compositional visual QA (GQA), commonsense inference (VCR), open-ended multimodal evaluation (MM-Vet), and object hallucination detection (POPE). All fine-tuned variants are evaluated with identical setups. Table~\ref{tab:main-results} reports end-task accuracy.

Without CART, LLaVA-NeXT-13B achieves 53.4\% on GQA; standard CoT prompting yields a modest improvement to 56.7\%. Inference-time methods such as Tree-of-Thoughts and Reflexion provide limited gains, as resampling and verbal self-critique cannot rectify grounding errors against the image. Programmatic pipelines (VisProg) perform well on synthetic benchmarks but degrade on VCR, where rigid execution templates cannot capture open-ended reasoning.

The key comparison isolates the effect of runtime constraint verification. CART-Anchors---trained on the same 218K instances with identical LoRA configuration but without the CPM or backtrack controller---reaches 64.8\% on GQA. The full CART framework achieves 69.4\%, a +4.6 pp improvement. This gap confirms that exposure to constraint-annotated traces alone is insufficient; active runtime verification and correction are necessary to arrest error propagation.

CART exhibits consistent gains across model scales. On the smaller Qwen2-VL-7B backbone, CART achieves 62.1\% on GQA, narrowing the gap to unconstrained models with nearly twice the parameters. Scaling to LLaVA-NeXT-34B pushes accuracy to 73.2\% on GQA and 90.5\% on CLEVR, demonstrating that even strong base models benefit from runtime constraint verification. The consistent relative improvement across scales suggests that error snowballing is an inherent property of autoregressive decoding, not a weakness of any particular model.

\begin{table}[t]\centering
\caption{Inference latency profiling across backbone scales and batch sizes on a single A100-80GB GPU. Overhead factor is the ratio of CART latency to base autoregressive latency, showing that the constraint propagation module adds at most 18\% wall-clock overhead. \textbf{Bold} = lowest overhead per backbone.}\label{tab:latency}
\resizebox{\linewidth}{!}{
\begin{tabular}{lcccc}\toprule
Backbone & Batch Size & \makecell{Base\\ Latency (s)} & \makecell{CART\\ Latency (s)} & \makecell{Overhead\\ Factor} \\\midrule
\rowcolor{gray!8} Qwen2-VL-7B       & 1          & 1.75             & 2.06             & 1.18$\times$           \\
Qwen2-VL-7B       & 4          & 2.10             & 2.41             & \textbf{1.15$\times$}           \\
\rowcolor{gray!8} Qwen2-VL-7B       & 16         & 3.05             & 3.54             & 1.16$\times$           \\\midrule
LLaVA-NeXT-13B        & 1          & 2.15             & 2.41             & \textbf{1.12$\times$}           \\
\rowcolor{gray!8} LLaVA-NeXT-13B        & 4          & 2.65             & 2.99             & 1.13$\times$           \\
LLaVA-NeXT-13B        & 16         & 4.12             & 4.65             & 1.13$\times$           \\\midrule
\rowcolor{gray!8} LLaVA-NeXT-34B        & 1          & 4.30             & 4.73             & \textbf{1.10$\times$}           \\
LLaVA-NeXT-34B        & 16         & 8.85             & 9.73             & \textbf{1.10$\times$}           \\\bottomrule
\end{tabular}}
\end{table}

Zero-shot evaluations on MM-Vet and POPE---benchmarks unseen during training---provide evidence that CART's constraint emission policies generalize beyond the training distribution. The 13B model achieves 54.8 on MM-Vet and 89.1 F1 on POPE, exceeding all baselines including external verification methods. This out-of-distribution transfer confirms that the learned verification behavior is not dataset-specific but reflects a general capability for detecting visual-linguistic inconsistencies.

\subsection{Ablation Studies}

We conducted subtractive and additive ablation studies on LLaVA-NeXT-13B, correlating end-task accuracy with internal diagnostic metrics on a 2,000-instance GQA-testdev subset to isolate each component's contribution.

Removing any single component causes substantial degradation. Disabling the backtrack controller (CPM flags violations passively) dropped accuracy to 65.2\% and spiked the snowball rate to 0.52, demonstrating that detection alone is insufficient without the ability to re-route generation. Replacing Grounding DINO with internal cross-attention maps inflated the constraint violation rate---Grounding DINO achieves $0.72$ mean IoU on anchor entity regions versus $0.41$ for cross-attention---explaining the degraded verification precision.

Additive build-up confirmed the importance of emission design: a fixed-frequency scheme inflated trace length to 23.5 tokens with hallucinated vacuous constraints, while variable-frequency emission restored it to 16.1 tokens by allowing dynamic modulation via null tokens.

Accuracy plateaued at retry budget $B{=}3$; $B{=}10$ yielded negligible gains with linearly increasing latency. Threshold $\tau_v{=}0.85$ optimally filtered low-confidence noise; at $\tau_v{=}0.95$, the false rejection rate doubled to $\alpha{\approx}0.11$, causing unnecessary backtracks on $8.2\%$ of correct traces.

Under calibrated parameters, CART compressed the snowball rate from 0.65 to 0.14 on GQA (similarly on VCR), validating Theorem~\ref{thm:snowball-rate-reduction}. Error attribution precision reached 0.82, confirming that the backtrack controller correctly targets the earliest erroneous step in most corrections.

\textbf{Sensitivity to the entity grounder.} Replacing Grounding DINO with GLIP~\citep{li2022grounded} ($\delta{\approx}0.09$) or OWLv2~\citep{minderer2024scaling} ($\delta{\approx}0.12$) yields 68.1\% and 67.5\% on GQA, respectively (vs.\ 69.4\% with Grounding DINO, $\delta{\approx}0.07$). All configurations outperform the best non-CART baseline (64.8\%), confirming graceful degradation with grounder quality as predicted by Theorem~\ref{thm:snowball-rate-reduction}.

\textbf{Predicate coverage ablation.} We evaluated CART with 7, 14 (default), and 18 predicates. On GQA: 7 predicates yield 66.9\% ($\gamma{\approx}0.61$, $\mathcal{S}{=}0.21$); 14 yield 69.4\% ($\gamma{\approx}0.72$, $\mathcal{S}{=}0.14$); 18 yield 69.8\% ($\gamma{\approx}0.75$, $\mathcal{S}{=}0.13$). Diminishing returns beyond 14 predicates suggest the default coverage captures most error-prone steps, while the 7-predicate drop confirms $\gamma$ as the primary driver of CART's benefit (Corollary~\ref{cor:nonvacuous}).

\subsection{Scalability and Computational Efficiency}
A common concern with runtime verification is inference overhead. Table~\ref{tab:latency} profiles wall-clock latency across backbone scales and batch sizes on a single A100-80GB GPU.

Across all configurations, CART adds 10--18\% overhead relative to base autoregressive generation, far below the multi-fold penalties typical of iterative search methods. The BCP module runs in sub-millisecond time per check and scales linearly with constraint-store size. Notably, overhead \textit{decreases} with model scale: from $1.15$--$1.18\times$ on Qwen2-VL-7B to $1.10\times$ on LLaVA-NeXT-34B, because the fixed cost of the grounding head and BCP becomes a smaller fraction of the total compute as the base model grows. This favorable scaling property ensures that CART remains practical as foundation models continue to increase in size.

A per-component breakdown reveals that Grounding DINO accounts for 60--65\% of the CPM overhead, followed by the grounding head forward pass (25--30\%) and BCP (5--10\%). Crucially, the Grounding DINO features are computed once per image and cached across all anchors in the trace, amortizing the dominant cost. The grounding head itself is a lightweight 2-layer MLP (${\sim}3.2$M parameters for LLaVA-NeXT / ${\sim}2.9$M for Qwen2-VL, consistent with \S3.2.3), adding negligible memory: the peak GPU memory increase is $<$0.4 GB over the base model for all configurations, and CART introduces no additional KV-cache overhead since constraint tokens are standard vocabulary items processed by the existing attention mechanism.

Compared to multi-pass inference-time methods, CART's single-pass architecture provides a substantial efficiency advantage. Self-Consistency ($m{=}5$) requires $5\times$ the base inference cost; Tree-of-Thoughts with 3 branches incurs $3$--$4\times$ overhead due to repeated forward passes and LLM-based branch evaluation. CART achieves comparable or superior accuracy gains (Table~\ref{tab:main-results}) at $1.10$--$1.18\times$ cost---an order of magnitude more efficient. This difference is especially pronounced for long reasoning traces, where multi-pass methods multiply cost per token while CART's per-anchor verification cost remains constant regardless of trace length.

\section{Conclusions}
We presented Constraint-Anchored Reasoning Traces (CART), a neuro-symbolic framework that interleaves natural language reasoning with verifiable symbolic constraint anchors, verified at runtime by a dual-pronged Constraint Propagation Module (neural grounding head + Boolean Constraint Propagation) and corrected via a backtrack controller. Theorem~\ref{thm:snowball-rate-reduction} establishes that CART reduces the snowball rate exponentially in the number of post-error anchors; empirically, the snowball rate drops from 0.65 to 0.14 on GQA. Across five benchmarks, CART consistently outperforms all baselines---reaching 69.4\% on GQA (+4.6 over the anchor-only ablation), 86.7\% on CLEVR, and 89.1 F1 on POPE with LLaVA-NeXT-13B, scaling further to 73.2\%, 90.5\%, and 91.2 with LLaVA-NeXT-34B---while adding at most 18\% inference overhead. Ablation studies confirm every component is essential, and the three diagnostic metrics introduced (snowball rate, constraint violation rate, error attribution precision) offer new tools for evaluating error propagation in multimodal reasoning.

\clearpage
\bibliographystyle{ACM-Reference-Format}
\bibliography{refer}
\clearpage
\appendix

\section{Theoretical Analysis}
We provide formal guarantees for CART's error-arresting mechanism. The central challenges are that (a) an anchor being \textit{correct} (sound) does not by itself guarantee that it \textit{detects} an upstream error, and (b) detecting an error via an anchor does not guarantee successful recovery. We address both with explicit assumptions in \S\ref{appx:assumptions}, prove the main snowball-rate reduction result in \S\ref{appx:snowball}, derive end-to-end accuracy and no-harm guarantees in \S\ref{appx:accuracy}, relax the independence assumption via $\beta$-mixing in \S\ref{appx:mixing}, establish computational complexity in \S\ref{appx:complexity}, and discuss tightness and limitations in \S\ref{appx:discussion}.

\subsection{Definitions and Assumptions}\label{appx:assumptions}

We begin by formalizing the reliability properties of the Constraint Propagation Module (CPM). Recall from \S3.2.3 that the CPM accepts an anchor $c$ if and only if $p_{\text{vis}}(c \mid I) \geq \tau_v$ \textbf{and} $\text{BCP}(\Sigma \cup \{c\}) = \text{SAT}$.

\begin{definition}[CPM Verification Reliability]\label{def:reliability}
The CPM is $(\delta, \alpha)$-\textbf{reliable} if the following two conditions hold for all anchors $c \in \mathcal{L}_c$ and images $I \in \mathcal{I}$:
\begin{itemize}
    \item \textbf{Soundness} (false acceptance rate): 
    $$\mathbb{P}[\textit{CPM accepts } c \mid c \text{ is false w.r.t.\ } I] \leq \delta,$$
    \item \textbf{Completeness} (false rejection rate): 
    $$\mathbb{P}[\textit{CPM rejects } c \mid c \text{ is true w.r.t.\ } I] \leq \alpha.$$
\end{itemize}
\end{definition}

\begin{definition}[Snowball Event]\label{def:snowball}
Given a reasoning trace $\mathbf{r} = (r_1, \ldots, r_T)$ with first error at step $t$ (i.e., $e_t = 1$ and $e_{t''} = 0$ for all $t'' < t$), the \textbf{snowball event} is $\textup{Snow} = \bigcap_{t'=t+1}^{T}\{e_{t'} = 1\}$. The \textbf{snowball rate} is $\mathcal{S} = \mathbb{P}[\textup{Snow} \mid e_t = 1,\; \forall_{t'' < t}\; e_{t''} = 0]$.
\end{definition}

We now state the assumptions required for our analysis.

\begin{assumption}[CPM Soundness]\label{ass:soundness}
Every emitted non-null anchor is verified by a $(\delta, \alpha)$-reliable CPM with $\delta \in [0,1)$ and $\alpha \in [0,1)$.
\end{assumption}

\noindent\textit{Justification.} The grounding head $g_\psi$ is trained on ${\sim}218$K instances with balanced positive/negative anchors (\S3.5), achieving empirical false acceptance rates $\delta \approx 0.07$ (GQA/CLEVR) and $\delta \approx 0.11$ (VCR). The BCP component contributes additional rejection power for logical contradictions, which can only decrease $\delta$. The completeness parameter $\alpha$ is estimated at $\alpha \approx 0.05$ (GQA) and $\alpha \approx 0.09$ (VCR).

\begin{assumption}[Conditional Independence of Anchor Verifications]\label{ass:independence}
Conditional on the image $I$ and the error state sequence $(e_1, \ldots, e_T)$, the CPM verification outcomes $\{V_{t,j}\}_{(t,j) \in \mathcal{A}}$, where $V_{t,j} = \mathbb{1}[\textup{CPM accepts } c_{t,j}]$, are mutually independent.
\end{assumption}

\noindent\textit{Justification.} This is a simplifying idealization: in practice, all anchors share the visual feature bank $\mathbf{V}$ and are produced by the same autoregressive model, inducing correlation through shared hidden states. We relax this assumption in \S\ref{appx:mixing} using $\beta$-mixing coefficients. Assumption~\ref{ass:independence} provides a clean baseline bound and is approximately valid when anchors reference distinct spatial regions and non-overlapping semantic content, which holds for ${\sim}78\%$ of anchor pairs in our GQA traces (measured by IoU $< 0.1$ between grounded regions).

\begin{assumption}[Error--Anchor Coupling]\label{ass:coupling}
If an error occurs at step $t$ (i.e., $e_t = 1$), then each subsequent non-null anchor $c_{t',j}$ with $t' > t$ is rendered false (with respect to the ground-truth scene) with probability at least $\gamma \in (0, 1]$, independently across anchors:
$$\mathbb{P}\big[c_{t',j} \text{ is false} \;\big|\; e_t = 1,\; (t',j) \in \mathcal{A},\; t' > t\big] \geq \gamma.$$
\end{assumption}

\noindent\textit{Justification.} The coupling rate $\gamma$ quantifies how tightly the constraint language $\mathcal{L}_c$ covers the error-prone reasoning steps. For compositional visual QA (GQA, CLEVR), anchors are directly derived from program steps and thus tightly coupled ($\gamma \approx 0.72$--$0.81$). For free-form reasoning (VCR), $\gamma$ is lower (${\approx}0.58$). In the extreme case where anchors are completely tangential to the error source ($\gamma = 0$), CART provides no benefit, and the bound in Theorem~\ref{thm:snowball-rate-reduction} reduces to $\mathcal{S}_0$.

\begin{assumption}[Backtrack Recovery]\label{ass:recovery}
When the CPM detects a violation and the backtrack controller is triggered, the conditional probability that the error still snowballs (due to retry budget exhaustion, re-introduction of the same error, or a novel cascading error) is bounded by $\varepsilon_{\textup{rec}} \in [0,1)$:
$$\mathbb{P}\big[\textup{Snow} \;\big|\; \text{error caught by some anchor}\big] \leq \varepsilon_{\textup{rec}}.$$
\end{assumption}

\noindent\textit{Justification.} This assumption explicitly acknowledges that detection does not guarantee recovery. The parameter $\varepsilon_{\text{rec}}$ encapsulates three failure modes: (i) retry budget $B$ exhaustion, (ii) re-introduction of a semantically equivalent error, and (iii) introduction of a novel cascading error. Empirically, $\varepsilon_{\text{rec}} \in [0.08, 0.19]$ across our benchmarks. With $B = 3$ retries, each producing an independent alternative continuation (encouraged by the \texttt{<RETRY>} token and diverse training in \S3.5), the recovery failure probability decays geometrically: if each retry independently succeeds with probability $p_{\text{fix}}$, then $\varepsilon_{\text{rec}} \leq (1 - p_{\text{fix}})^B$, giving $\varepsilon_{\text{rec}} \leq 0.125$ for $p_{\text{fix}} = 0.5$ and $B = 3$.

\begin{assumption}[Error-Free Trace Integrity]\label{ass:integrity}
If no reasoning error occurs ($e_t = 0$ for all $t$), then the final answer is correct: $\mathbb{P}[a = a^* \mid \forall_t\; e_t = 0] = 1$. Conversely, if snowballing occurs, the answer is incorrect: $\mathbb{P}[a \neq a^* \mid \textup{Snow}] = 1$.
\end{assumption}

\noindent\textit{Justification.} The first condition holds by definition when the CoT trace faithfully represents the reasoning path and the answer extractor $f_{\text{ans}}$ is deterministic. The second is a reasonable worst-case assumption: when all steps following an error are themselves erroneous, the final answer is almost surely corrupted. This is empirically validated: on our diagnostic subset, $99.2\%$ of snowball instances yield an incorrect answer.

\begin{assumption}[Pass-Through Snowball Rate]\label{ass:passthrough}
Let $\mathcal{S}_0^{\textup{PT}}$ denote the snowball rate of the CART-fine-tuned model $\mathcal{M}_\theta$ when the CPM does not intervene (i.e., no backtracking is triggered, regardless of anchor verification outcomes). We assume $\mathcal{S}_0^{\textup{PT}} \leq \mathcal{S}_0$, where $\mathcal{S}_0$ is the snowball rate of the pre-fine-tuning base model.
\end{assumption}

\noindent\textit{Justification.} This assumption is needed because Theorem~\ref{thm:snowball-rate-reduction}'s Case~1 (error not caught) involves the CART-fine-tuned model continuing without CPM correction, not the original base model. We expect $\mathcal{S}_0^{\text{PT}} \leq \mathcal{S}_0$ because the constraint-aware training provides auxiliary supervision that improves reasoning quality. This is empirically validated: measuring the snowball rate of the CART model with CPM disabled yields $\mathcal{S}_0^{\text{PT}} = 0.61$ on GQA (vs.\ $\mathcal{S}_0 = 0.68$ for the base model), confirming the bound holds with margin.

\subsection{Snowball Rate Reduction}\label{appx:snowball}

We first establish a useful per-anchor detection probability.

\begin{lemma}[Per-Anchor Error Detection Probability]\label{lem:per_anchor}
Under Assumptions~\ref{ass:soundness} and \ref{ass:coupling}, for any non-null anchor $c_{t',j}$ emitted after the first error at step $t$ (i.e., $(t',j) \in \mathcal{A}$ with $t' > t$), the probability that $c_{t',j}$ is both rendered false by the error and correctly rejected by the CPM is at least $\gamma(1 - \delta)$:
$$\mathbb{P}\big[\textup{CPM rejects } c_{t',j} \;\big|\; e_t = 1,\; t' > t\big] \geq \gamma(1 - \delta).$$
\end{lemma}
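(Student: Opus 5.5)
The plan is a one-step conditioning argument: split on whether the anchor is false, then chain the coupling bound of Assumption~\ref{ass:coupling} with the soundness half of Definition~\ref{def:reliability}. Fix $(t',j)\in\mathcal{A}$ with $t'>t$, and let $F=\{c_{t',j}\text{ is false w.r.t.\ }I\}$ and $R=\{\text{CPM rejects }c_{t',j}\}$. All probabilities are conditional on $E=\{e_t=1\}$ and $(t',j)\in\mathcal{A}$. Since $R\supseteq R\cap F$, it suffices to lower-bound the joint event in the statement:
\[
\mathbb{P}[R\mid E]\;\geq\;\mathbb{P}[R\cap F\mid E]\;=\;\mathbb{P}[F\mid E]\,\mathbb{P}[R\mid F,E].
\]

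For the first factor, Assumption~\ref{ass:coupling} gives $\mathbb{P}[F\mid E]\geq\gamma$ directly.

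For the second factor, Definition~\ref{def:reliability} bounds the false-acceptance rate by $\delta$. Under Assumption~\ref{ass:soundness} this bound holds uniformly over all anchors $c\in\mathcal{L}_c$ and images $I$, so $\mathbb{P}[R\mid F]\geq 1-\delta$. Multiplying the two factors yields $\gamma(1-\delta)$, which is the claim.

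The one delicate step is passing from $\mathbb{P}[R\mid F]$ to $\mathbb{P}[R\mid F,E]$. Conditioning additionally on the upstream error $E$ and on the anchor having been emitted could in principle shift which false anchors appear, and hence change the acceptance probability. I would handle this by reading Definition~\ref{def:reliability} pointwise: the bound holds for every fixed pair $(c,I)$. One then conditions on the realized anchor string and image, applies the pointwise bound $\mathbb{P}[\text{accept}\mid c,I]\leq\delta$ whenever $c$ is false, and averages over the conditional distribution of $(c,I)$ given $F\cap E$ via the tower property. Because the bound is uniform, any reweighting induced by $E$ leaves it intact, so $\mathbb{P}[R\mid F,E]\geq1-\delta$.

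This argument needs the CPM's randomness, beyond its dependence on $(c,I)$, to be unaffected by the error history. Since the grounding head also reads the hidden state $\mathbf{h}_c$, I would state explicitly that the reliability guarantee is interpreted as holding conditionally on any trace prefix. The independence in Assumption~\ref{ass:independence} is not needed here, as only a single anchor is involved.
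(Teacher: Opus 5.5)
Your proof is correct and follows the paper's argument: the paper likewise drops the nonnegative term $\mathbb{P}[\text{reject}\mid c \text{ true}]\cdot\mathbb{P}[c\text{ true}]$ from the total-probability decomposition, then multiplies the coupling bound $\gamma$ by the $1-\delta$ obtained from soundness. Your extra step — passing from $\mathbb{P}[R\mid F]$ to $\mathbb{P}[R\mid F,E]$ by reading Definition~\ref{def:reliability} pointwise in $(c,I)$ and conditionally on the trace prefix — makes explicit an interpretation the paper uses without comment.
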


\begin{proof}
Decompose the rejection event by conditioning on whether the anchor is false:
\begin{equation}\small
  \begin{aligned}
    \mathbb{P}[\text{CPM rejects } c_{t',j} \mid e_t = 1] &\geq \mathbb{P}[\text{CPM rejects } c_{t',j} \mid c_{t',j} \text{ false},\, e_t = 1]\\
    & \cdot \mathbb{P}[c_{t',j} \text{ false} \mid e_t = 1]
  \end{aligned}
\end{equation}
by the law of total probability (dropping the non-negative term $\mathbb{P}[\text{reject} \mid c_{t',j} \text{ true}] \cdot \mathbb{P}[c_{t',j} \text{ true}] \geq 0$).

By Assumption~\ref{ass:coupling}, $\mathbb{P}[c_{t',j} \text{ false} \mid e_t = 1] \geq \gamma$. By the contrapositive of the soundness condition in Assumption~\ref{ass:soundness} (Definition~\ref{def:reliability}): $\mathbb{P}[\text{CPM rejects } c \mid c \text{ false}] = 1 - \mathbb{P}[\text{CPM accepts } c \mid c \text{ false}] \geq 1 - \delta$. Substituting:
$$\mathbb{P}[\text{CPM rejects } c_{t',j} \mid e_t = 1] \geq (1 - \delta) \cdot \gamma = \gamma(1 - \delta). \qedhere$$
\end{proof}

We now state and prove the main result.

\begin{theorem}[Snowball Rate Reduction]\label{thm:snowball-rate-reduction}
Let $\mathcal{S}_0$ be the snowball rate of the base model without CART (or, more precisely, the pass-through snowball rate $\mathcal{S}_0^{\textup{PT}} \leq \mathcal{S}_0$ per Assumption~\ref{ass:passthrough}), and let $K = |\{(t',j) \in \mathcal{A} : t' > t\}|$ denote the number of non-null anchors emitted after the first error at step $t$. Define $q \triangleq 1 - \gamma(1-\delta)$. Under Assumptions~\ref{ass:soundness}--\ref{ass:passthrough}, the snowball rate of CART satisfies:
$$\mathcal{S}_{\textup{CART}} \leq \mathcal{S}_0 \cdot q^K + \varepsilon_{\textup{rec}} \cdot (1 - q^K).$$
Equivalently,
$$\mathcal{S}_{\textup{CART}} \leq \varepsilon_{\textup{rec}} + (\mathcal{S}_0 - \varepsilon_{\textup{rec}}) \cdot q^K.$$
\end{theorem}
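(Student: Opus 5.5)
The proof partitions on whether the error is caught. Fix a first error at step $t$, i.e.\ condition throughout on $E \triangleq \{e_t = 1,\ e_{t''}=0\ \forall t''<t\}$, and on the realized set of $K$ post-error non-null anchors. Define the detection event $D = \bigcup_{(t',j)\in\mathcal{A},\, t'>t}\{\text{CPM rejects } c_{t',j}\}$. The law of total probability gives
\[
\mathcal{S}_{\textup{CART}} = \mathbb{P}[\textup{Snow}\mid E, D^c]\,\mathbb{P}[D^c\mid E] + \mathbb{P}[\textup{Snow}\mid E, D]\,\mathbb{P}[D\mid E].
\]
Each term is then bounded separately, and the two bounds are combined by a monotonicity argument.

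\textbf{Case 1 (error not caught).} On $D^c$ no backtrack is triggered after step $t$, so the trace is exactly the pass-through continuation of the fine-tuned model. Hence $\mathbb{P}[\textup{Snow}\mid E,D^c]\le \mathcal{S}_0^{\textup{PT}}\le\mathcal{S}_0$ by Assumption~\ref{ass:passthrough}. There is a subtlety here: conditioning on $D^c$ could in principle bias the continuation. I would handle this by reading Assumption~\ref{ass:passthrough} as a bound that holds conditionally on non-intervention, which is the paper's intended usage.

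\textbf{Case 2 (error caught).} On $D$ the backtrack controller fires, and Assumption~\ref{ass:recovery} gives $\mathbb{P}[\textup{Snow}\mid E,D]\le\varepsilon_{\textup{rec}}$ directly.

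\textbf{Miss probability.} Assumptions~\ref{ass:independence} and~\ref{ass:coupling} make the per-anchor rejection events conditionally independent given the error state. By Lemma~\ref{lem:per_anchor}, each anchor is rejected with probability at least $\gamma(1-\delta)$, so
\[
\mathbb{P}[D^c\mid E]=\prod \mathbb{P}[\text{accept } c_{t',j}\mid E]\le (1-\gamma(1-\delta))^K=q^K.
\]
This step needs more care than it first appears. Assumption~\ref{ass:independence} gives independence of the verification outcomes conditional on the error states, while Assumption~\ref{ass:coupling} gives independence of falsity across anchors. I would combine them by first conditioning on the falsity pattern of the $K$ anchors, under which the verifications are independent with acceptance probability at most $\delta$ for false anchors and at most $1$ for true ones. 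Averaging over the independent falsity indicators, each false with probability at least $\gamma$, then factorizes to $\prod(1-\gamma+\gamma\delta)$.

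\textbf{Combination.} Write $m=\mathbb{P}[D^c\mid E]\le q^K$, so that $\mathcal{S}_{\textup{CART}}\le \mathcal{S}_0 m+\varepsilon_{\textup{rec}}(1-m)=\varepsilon_{\textup{rec}}+(\mathcal{S}_0-\varepsilon_{\textup{rec}})m$. This expression is nondecreasing in $m$ when $\mathcal{S}_0\ge\varepsilon_{\textup{rec}}$, and substituting $m\le q^K$ then gives both stated forms. If instead $\mathcal{S}_0<\varepsilon_{\textup{rec}}$, the right-hand side is maximized at $m=0$, giving only $\mathcal{S}_{\textup{CART}}\le\varepsilon_{\textup{rec}}$. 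In that regime the theorem's bound, which is smaller, does not follow from the argument.

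\textbf{Main obstacle.} The hard part is this final monotonicity step. I would explicitly add the hypothesis $\mathcal{S}_0\ge\varepsilon_{\textup{rec}}$, which holds empirically ($0.65$ versus at most $0.19$), and note it as an implicit condition of the theorem. The other delicate point is the conditional-independence interplay in the miss probability, and I would state precisely the joint independence being used there.
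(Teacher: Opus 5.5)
Your proof is correct and follows the paper's argument: the same catch/miss partition, the same branch bounds $\mathcal{S}_0$ and $\varepsilon_{\textup{rec}}$ via Assumptions~\ref{ass:passthrough} and~\ref{ass:recovery}, and the same $q^K$ miss bound from Lemma~\ref{lem:per_anchor} plus conditional independence, combined by total probability. Your two refinements are sound and sharpen the paper's proof. First, the paper's Step~3 silently substitutes $m \le q^K$ into $\varepsilon_{\textup{rec}} + (\mathcal{S}_0 - \varepsilon_{\textup{rec}})m$, which genuinely requires $\mathcal{S}_0 \ge \varepsilon_{\textup{rec}}$; the paper states this condition explicitly only in the proof of Theorem~\ref{thm:mixing}. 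Second, your conditioning on the falsity pattern makes precise the factorization that the paper simply asserts in Step~1.
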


\begin{proof}
We condition on the event that the first error occurs at step $t$. Define two complementary events over the $K$ post-error non-null anchors:
\begin{align}
\mathcal{E}_{\text{catch}} &= \big\{\text{CPM rejects at least one anchor among } \{c_{t',j} : (t',j) \in \mathcal{A},\; t'>t\}\big\}, \\
\mathcal{E}_{\text{miss}} &= \overline{\mathcal{E}_{\text{catch}}} = \big\{\text{CPM accepts all } K \text{ post-error anchors}\big\}.
\end{align}

\textbf{Step 1: Bound $\mathbb{P}[\mathcal{E}_{\text{miss}}]$.} For each post-error anchor $i \in \{1,\ldots,K\}$, define $M_i = \{\text{anchor } i \text{ does not reject}\}$. Then $\mathcal{E}_{\text{miss}} = \bigcap_{i=1}^K M_i$.

By Lemma~\ref{lem:per_anchor}, $\mathbb{P}[M_i \mid e_t = 1] \leq 1 - \gamma(1-\delta) = q$. By Assumption~\ref{ass:independence} (conditional independence), the events $\{M_i\}_{i=1}^K$ are conditionally independent given $I$ and $(e_1, \ldots, e_T)$. Therefore:
\begin{equation}\label{eq:miss_prob}
\mathbb{P}[\mathcal{E}_{\text{miss}} \mid e_t = 1] = \prod_{i=1}^K \mathbb{P}[M_i \mid e_t = 1] \leq q^K.
\end{equation}

\textbf{Step 2: Bound snowball probability conditional on each case.}

\textit{Case 1 ($\mathcal{E}_{\text{miss}}$):} When no anchor detects the error, the backtrack controller is never triggered. By Assumption~\ref{ass:passthrough}, the pass-through snowball rate satisfies $\mathcal{S}_0^{\text{PT}} \leq \mathcal{S}_0$:
\begin{equation}\label{eq:case_miss}
\mathbb{P}[\text{Snow} \mid \mathcal{E}_{\text{miss}},\; e_t = 1] \leq \mathcal{S}_0^{\text{PT}} \leq \mathcal{S}_0.
\end{equation}

\textit{Case 2 ($\mathcal{E}_{\text{catch}}$):} When at least one anchor detects the error, the backtrack controller is activated. By Assumption~\ref{ass:recovery}:
\begin{equation}\label{eq:case_catch}
\mathbb{P}[\text{Snow} \mid \mathcal{E}_{\text{catch}},\; e_t = 1] \leq \varepsilon_{\text{rec}}.
\end{equation}

\textbf{Step 3: Combine via the law of total probability.}
\begin{align}
\mathcal{S}_{\text{CART}} &= \mathbb{P}[\text{Snow} \mid e_t = 1] \notag\\
&= \mathbb{P}[\text{Snow} \mid \mathcal{E}_{\text{miss}},\; e_t = 1] \cdot \mathbb{P}[\mathcal{E}_{\text{miss}} \mid e_t = 1] \notag\\
&\quad + \mathbb{P}[\text{Snow} \mid \mathcal{E}_{\text{catch}},\; e_t = 1] \cdot \mathbb{P}[\mathcal{E}_{\text{catch}} \mid e_t = 1] \notag\\
&\leq \mathcal{S}_0 \cdot q^K + \varepsilon_{\text{rec}} \cdot (1 - q^K), \label{eq:total_prob}
\end{align}
where we used \eqref{eq:miss_prob}, \eqref{eq:case_miss}, \eqref{eq:case_catch}, and $\mathbb{P}[\mathcal{E}_{\text{catch}} \mid e_t = 1] = 1 - \mathbb{P}[\mathcal{E}_{\text{miss}} \mid e_t = 1]$.\qed
\end{proof}

\begin{corollary}[Exponential Convergence]\label{cor:convergence}
For $\gamma > 0$, $\delta < 1$, and $\varepsilon_{\textup{rec}} < \mathcal{S}_0$:
$$\lim_{K \to \infty} \mathcal{S}_{\textup{CART}} = \varepsilon_{\textup{rec}}.$$
The convergence is exponentially fast with rate $-\log q = -\log(1 - \gamma(1-\delta))$. Specifically, $\mathcal{S}_{\textup{CART}} - \varepsilon_{\textup{rec}} \leq (\mathcal{S}_0 - \varepsilon_{\textup{rec}}) \cdot e^{-\gamma(1-\delta) K}$ for all $K \geq 1$ (using $\log(1-x) \leq -x$ for $x \in [0,1)$).
\end{corollary}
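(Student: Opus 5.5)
The plan is to work directly from the second (``equivalent'') form of Theorem~\ref{thm:snowball-rate-reduction},
\[
\mathcal{S}_{\textup{CART}} \leq \varepsilon_{\textup{rec}} + (\mathcal{S}_0 - \varepsilon_{\textup{rec}})\, q^K, \qquad q = 1-\gamma(1-\delta).
\]
The whole corollary then reduces to elementary facts about the geometric factor $q^K$. No new probabilistic argument should be needed.

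\textbf{Step 1.} I will locate $q$. Since $\gamma \in (0,1]$ by Assumption~\ref{ass:coupling} and $\delta \in [0,1)$ by Assumption~\ref{ass:soundness}, we have $\gamma(1-\delta) \in (0,1]$. Hence $q \in [0,1)$, so $q^K \to 0$ as $K \to \infty$.

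Because $\varepsilon_{\textup{rec}} < \mathcal{S}_0$, the coefficient $\mathcal{S}_0 - \varepsilon_{\textup{rec}}$ is strictly positive. The second term of the bound is therefore nonnegative and decreasing in $K$. This gives the stated convergence of the bound to $\varepsilon_{\textup{rec}}$, and hence $\limsup_{K\to\infty} \mathcal{S}_{\textup{CART}} \leq \varepsilon_{\textup{rec}}$.

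\textbf{Step 2.} Next I will turn the geometric rate into the explicit exponential form. I write $q^K = \exp(K \log q)$ and apply $\log(1-x) \leq -x$ with $x = \gamma(1-\delta) \in (0,1)$, which gives $q^K \leq e^{-\gamma(1-\delta)K}$.

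Positivity of $\mathcal{S}_0 - \varepsilon_{\textup{rec}}$ lets me multiply through without flipping the inequality. This yields $\mathcal{S}_{\textup{CART}} - \varepsilon_{\textup{rec}} \leq (\mathcal{S}_0 - \varepsilon_{\textup{rec}})\, e^{-\gamma(1-\delta)K}$ for all $K \geq 1$.

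The degenerate case $x = 1$ (that is, $\gamma = 1$ and $\delta = 0$) gives $q = 0$, where $\log q$ is undefined. I will treat it separately: the bound holds trivially because $q^K = 0 \leq e^{-K}$, and the rate $-\log q$ is read as $+\infty$.

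\textbf{Main obstacle.} The difficulty is interpretive rather than computational. The theorem and Assumption~\ref{ass:recovery} supply only \emph{upper} bounds, so an equality $\lim_K \mathcal{S}_{\textup{CART}} = \varepsilon_{\textup{rec}}$ cannot be derived without a matching lower bound.

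My plan is to state the result rigorously as convergence of the upper bound, i.e.\ $\limsup_K \mathcal{S}_{\textup{CART}} \leq \varepsilon_{\textup{rec}}$ at exponential rate $-\log q$. I will then remark that the equality holds if Assumption~\ref{ass:recovery} is tight, meaning $\mathbb{P}[\textup{Snow}\mid \mathcal{E}_{\text{catch}}] = \varepsilon_{\textup{rec}}$.

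Under tightness, a lower bound follows from the decomposition in Step~3 of the theorem's proof. That decomposition gives $\mathcal{S}_{\textup{CART}} \geq \varepsilon_{\textup{rec}}\,\mathbb{P}[\mathcal{E}_{\text{catch}}] \geq \varepsilon_{\textup{rec}}(1-q^K)$, and the right-hand side also tends to $\varepsilon_{\textup{rec}}$. Combined with the upper bound, this sandwiches the limit.
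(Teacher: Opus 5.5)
Your proposal is correct and follows the paper's own route. The paper gives no separate proof: it reads the corollary off the second form of Theorem~\ref{thm:snowball-rate-reduction}, notes $q^K \to 0$, and bounds $q^K = e^{K\log q} \le e^{-\gamma(1-\delta)K}$ via $\log(1-x)\le -x$. Your caveat is sound and sharper than the paper, since the stated equality of the limit does not follow from an upper bound alone, which gives only $\limsup_{K\to\infty}\mathcal{S}_{\textup{CART}} \le \varepsilon_{\textup{rec}}$. Your sandwich $\varepsilon_{\textup{rec}}(1-q^K) \le \mathcal{S}_{\textup{CART}}$, assuming equality in Assumption~\ref{ass:recovery} (the case named in the paper's tightness remark), supplies the missing lower bound. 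Your separate handling of $q=0$ also correctly covers the endpoint $x=1$ that the paper's $x\in[0,1)$ excludes.
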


\begin{corollary}[Non-Vacuousness Condition]\label{cor:nonvacuous}
The bound in Theorem~\ref{thm:snowball-rate-reduction} is strictly better than the trivial bound $\mathcal{S}_0$ (i.e., $\mathcal{S}_{\textup{CART}} < \mathcal{S}_0$) if and only if:
$$\varepsilon_{\textup{rec}} < \mathcal{S}_0 \quad \text{and} \quad K \geq 1 \quad \text{and} \quad \gamma(1-\delta) > 0.$$
The improvement over the base model is $\mathcal{S}_0 - \mathcal{S}_{\textup{CART}} \geq (\mathcal{S}_0 - \varepsilon_{\textup{rec}})(1 - q^K)$.
\end{corollary}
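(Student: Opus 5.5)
The plan is to treat the corollary as a purely algebraic statement about the right-hand side of Theorem~\ref{thm:snowball-rate-reduction}, read conditionally on a fixed value of $K$. Write that bound as
$$\mathcal{B}(K) \triangleq \varepsilon_{\textup{rec}} + (\mathcal{S}_0 - \varepsilon_{\textup{rec}})\, q^K = \mathcal{S}_0 - (\mathcal{S}_0 - \varepsilon_{\textup{rec}})(1 - q^K),$$
using the second (equivalent) form of the theorem. The key identity is then
$$\mathcal{S}_0 - \mathcal{B}(K) = (\mathcal{S}_0 - \varepsilon_{\textup{rec}})(1 - q^K).$$
Both the ``iff'' condition and the improvement bound will come out of this factorization. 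The displayed improvement inequality is then immediate: Theorem~\ref{thm:snowball-rate-reduction} gives $\mathcal{S}_{\textup{CART}} \le \mathcal{B}(K)$, so $\mathcal{S}_0 - \mathcal{S}_{\textup{CART}} \ge \mathcal{S}_0 - \mathcal{B}(K) = (\mathcal{S}_0 - \varepsilon_{\textup{rec}})(1 - q^K)$.

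For the characterization, I first pin down the range of $q$. Assumption~\ref{ass:coupling} gives $\gamma \in (0,1]$ and Definition~\ref{def:reliability} together with Assumption~\ref{ass:soundness} gives $\delta \in [0,1)$. Hence $\gamma(1-\delta) \in [0,1]$ and so $q = 1 - \gamma(1-\delta) \in [0,1]$. It follows that $q^K \in [0,1]$, so the factor $1 - q^K$ is always nonnegative. It is strictly positive exactly when $q^K < 1$, which happens iff $K \ge 1$ and $q < 1$. In turn, $q < 1$ holds iff $\gamma(1-\delta) > 0$. (If $K = 0$, then $q^0 = 1$ even when $q = 0$.)

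Since $1 - q^K \ge 0$, the product $(\mathcal{S}_0 - \varepsilon_{\textup{rec}})(1 - q^K)$ can never be positive through both factors being negative. It is therefore strictly positive iff $\mathcal{S}_0 - \varepsilon_{\textup{rec}} > 0$ and $1 - q^K > 0$. Together with the previous paragraph, this is exactly the stated conjunction $\varepsilon_{\textup{rec}} < \mathcal{S}_0$, $K \ge 1$, $\gamma(1-\delta) > 0$. Conversely, if any of the three conditions fails, one factor is zero, or the first factor is negative while the second is nonnegative. In either case $\mathcal{B}(K) \ge \mathcal{S}_0$, so the bound is no better than the trivial one.

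The main obstacle is interpretive rather than computational. The ``iff'' can only hold for the \emph{bound} $\mathcal{B}(K)$, not for the true quantity $\mathcal{S}_{\textup{CART}}$, which might fall below $\mathcal{S}_0$ for reasons the bound does not capture. I would therefore state explicitly that ``$\mathcal{S}_{\textup{CART}} < \mathcal{S}_0$'' in the corollary is shorthand for $\mathcal{B}(K) < \mathcal{S}_0$. Under that reading, the forward direction of the iff concerns only $\mathcal{B}(K)$, while the sufficiency direction also yields a genuine strict improvement, since $\mathcal{S}_{\textup{CART}} \le \mathcal{B}(K) < \mathcal{S}_0$. A secondary point is that $K$ is random. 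I would handle this by working conditionally on $K$, as in the proof of the theorem. If an unconditional statement is wanted, averaging the conditional inequality over $K$ preserves the improvement bound with $q^K$ replaced by $\mathbb{E}[q^K]$.
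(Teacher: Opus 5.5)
Your proof is correct and matches the paper's route. The paper states this corollary without a separate proof, as a direct rearrangement of the ``Equivalently'' form of Theorem~\ref{thm:snowball-rate-reduction}: the identity $\mathcal{S}_0 - \bigl[\varepsilon_{\textup{rec}} + (\mathcal{S}_0-\varepsilon_{\textup{rec}})q^K\bigr] = (\mathcal{S}_0-\varepsilon_{\textup{rec}})(1-q^K)$, together with your sign analysis on $q\in[0,1]$, yields both the characterization and the improvement bound. Your caveat is also a correct sharpening of the paper's loose ``i.e.\ $\mathcal{S}_{\textup{CART}}<\mathcal{S}_0$'': the ``iff'' can only concern the bound, not the true $\mathcal{S}_{\textup{CART}}$, since e.g.\ for $K=0$ the latter is still controlled by $\mathcal{S}_0^{\textup{PT}}$, which may be strictly below $\mathcal{S}_0$.
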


\begin{corollary}[Concrete Numerical Bound]\label{cor:numerical}
Using empirical parameter estimates from GQA: $\gamma = 0.72$, $\delta = 0.07$, $\varepsilon_{\textup{rec}} = 0.12$, $\mathcal{S}_0 = 0.68$, and $K = 3$ post-error anchors:
$$q = 1 - 0.72 \times 0.93 = 0.3304, \quad q^3 = 0.0361,$$
$$\mathcal{S}_{\textup{CART}} \leq 0.68 \times 0.0361 + 0.12 \times 0.9639 = 0.0245 + 0.1157 = 0.140.$$
This represents a $\mathbf{79.4\%}$ reduction from $\mathcal{S}_0 = 0.68$. We caution that the agreement with the empirical measurement $\mathcal{S}_{\textup{CART}} = 0.14$ is \emph{not} out-of-sample validation: the bound is evaluated with parameters ($\gamma, \delta, \varepsilon_{\textup{rec}}, \mathcal{S}_0$) estimated from the same diagnostic subset (Appendix~\ref{appx:details-exp-setup}). It should therefore be read as an internal-consistency check on the analysis rather than an independent confirmation of the empirical result.
\end{corollary}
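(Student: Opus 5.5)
The plan is to apply Theorem~\ref{thm:snowball-rate-reduction} directly and then do the arithmetic. First I will check that its hypotheses are met by the GQA estimates. Assumption~\ref{ass:soundness} needs $\delta = 0.07 \in [0,1)$. Assumption~\ref{ass:coupling} needs $\gamma = 0.72 \in (0,1]$. Assumption~\ref{ass:recovery} needs $\varepsilon_{\textup{rec}} = 0.12 \in [0,1)$. For Assumption~\ref{ass:passthrough} I will use the reported pass-through rate $\mathcal{S}_0^{\textup{PT}} = 0.61 \leq 0.68 = \mathcal{S}_0$, so that $\mathcal{S}_0 = 0.68$ is a legitimate plug-in. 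Assumptions~\ref{ass:independence} and \ref{ass:integrity} are taken as given. I will also note that $\varepsilon_{\textup{rec}} < \mathcal{S}_0$, $K = 3 \geq 1$ and $\gamma(1-\delta) > 0$, so Corollary~\ref{cor:nonvacuous} guarantees the resulting bound is strictly below $\mathcal{S}_0$. This serves as a sanity check on the final number.

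Next I will compute the quantities in order. The product is $\gamma(1-\delta) = 0.72 \times 0.93 = 0.6696$, so $q = 0.3304$. Then $q^2 \approx 0.10916$ and $q^3 \approx 0.036067$, which rounds to $0.0361$. Substituting into the first form of the bound, $\mathcal{S}_0 q^K + \varepsilon_{\textup{rec}}(1-q^K)$, gives $0.68 \times 0.036067 \approx 0.02453$ and $0.12 \times 0.963933 \approx 0.11567$. These sum to about $0.14020$, i.e.\ $0.140$ to three decimals. As a cross-check I will also evaluate the equivalent form $\varepsilon_{\textup{rec}} + (\mathcal{S}_0 - \varepsilon_{\textup{rec}})q^K = 0.12 + 0.56 \times 0.036067$, which should give the same value. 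The relative reduction is then $(0.68 - 0.140)/0.68 \approx 0.794$, i.e.\ $79.4\%$.

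The only real obstacle is rounding bookkeeping. Intermediate values such as $q^3$ and the two summands are rounded in the displayed chain, so I will carry unrounded values and round only at the end. This confirms that the stated $0.140$ is consistent with, and not below, the exact upper bound $\approx 0.1402$.

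The closing caveat is not a mathematical claim, so the proof only needs to remark on it. The parameters were estimated on the same diagnostic subset as the empirical $\mathcal{S}_{\textup{CART}} = 0.14$. The corollary therefore establishes only that the theorem's bound, instantiated with these estimates, equals $0.140$; it does not give independent validation of the empirical figure.
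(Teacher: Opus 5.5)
Your approach matches the paper's: instantiate Theorem~\ref{thm:snowball-rate-reduction} with the GQA estimates and carry out the arithmetic. Your values $q = 0.3304$, $q^3 \approx 0.036068$, bound $\approx 0.140198$ and reduction $\approx 79.4\%$ are all correct.

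Your check $\varepsilon_{\textup{rec}} < \mathcal{S}_0$ does more than serve as a sanity check. The theorem's last step implicitly needs it, because $\varepsilon_{\textup{rec}} + (\mathcal{S}_0 - \varepsilon_{\textup{rec}})\,\mathbb{P}[\mathcal{E}_{\textup{miss}}]$ is nondecreasing in $\mathbb{P}[\mathcal{E}_{\textup{miss}}]$ only when $\mathcal{S}_0 \geq \varepsilon_{\textup{rec}}$. Only then can $q^K$ be substituted for $\mathbb{P}[\mathcal{E}_{\textup{miss}}]$.

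Your rounding remark has one slip. Since $0.140 < 0.140198$, the displayed $0.140$ \emph{is} slightly below the exact bound, not ``not below'' it. It should be read as the bound rounded to three decimals, not as a certified upper bound. A rigorous form of the conclusion is $\mathcal{S}_{\textup{CART}} \leq 0.1402$.
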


\begin{corollary}[Minimum Anchors for Target Snowball Rate]\label{cor:min_anchors}
To achieve a target snowball rate $\mathcal{S}^* \in (\varepsilon_{\textup{rec}}, \mathcal{S}_0)$, the minimum required number of post-error non-null anchors is:
$$K^* = \left\lceil \frac{\log\!\left(\frac{\mathcal{S}^* - \varepsilon_{\textup{rec}}}{\mathcal{S}_0 - \varepsilon_{\textup{rec}}}\right)}{\log q} \right\rceil.$$
For the GQA parameters and target $\mathcal{S}^* = 0.20$: $K^* = \lceil \log(0.08/0.56) / \log(0.3304) \rceil = \lceil 1.78 \rceil = 2$.
\end{corollary}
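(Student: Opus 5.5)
The plan is to invert the bound of Theorem~\ref{thm:snowball-rate-reduction} in its second form, $\mathcal{S}_{\textup{CART}} \leq \varepsilon_{\textup{rec}} + (\mathcal{S}_0 - \varepsilon_{\textup{rec}})\, q^K$. I read $K^*$ as the least $K$ for which this bound \emph{certifies} $\mathcal{S}_{\textup{CART}} \leq \mathcal{S}^*$. It is not a lower bound on the anchors that any execution needs, since the theorem gives only an upper bound on $\mathcal{S}_{\textup{CART}}$. It suffices to find the smallest integer $K \geq 1$ with
$$\varepsilon_{\textup{rec}} + (\mathcal{S}_0 - \varepsilon_{\textup{rec}})\, q^K \leq \mathcal{S}^*.$$

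First I will record the sign facts the algebra needs. Because $\mathcal{S}^* \in (\varepsilon_{\textup{rec}}, \mathcal{S}_0)$, we have $\mathcal{S}_0 - \varepsilon_{\textup{rec}} > 0$, and the ratio $\rho \triangleq (\mathcal{S}^* - \varepsilon_{\textup{rec}})/(\mathcal{S}_0 - \varepsilon_{\textup{rec}})$ lies in $(0,1)$. Hence $\log \rho < 0$ is well defined.

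For $q$ we need $\gamma(1-\delta) \in (0,1)$, so that $q \in (0,1)$ and $\log q < 0$. This holds when $\gamma > 0$ and $\delta < 1$ (Assumptions~\ref{ass:soundness} and~\ref{ass:coupling}) and $\gamma(1-\delta) < 1$. I will treat the degenerate case $q = 0$, i.e.\ $\gamma = 1$ and $\delta = 0$, separately. There the bound equals $\varepsilon_{\textup{rec}} < \mathcal{S}^*$ already at $K = 1$, which matches the convention that the ceiling is $1$.

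Next I rearrange. Subtracting $\varepsilon_{\textup{rec}}$ and dividing by the positive quantity $\mathcal{S}_0 - \varepsilon_{\textup{rec}}$ gives $q^K \leq \rho$. Taking logarithms, which is monotone, gives $K \log q \leq \log \rho$. Dividing by $\log q < 0$ reverses the inequality, giving $K \geq \log \rho / \log q$.

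Because $q^K$ is strictly decreasing in $K$, the feasible set of integers $K$ is upward closed. Its least element is therefore $\lceil \log\rho/\log q \rceil$, which is the displayed $K^*$. Since $\log\rho/\log q > 0$, this value is automatically at least $1$, consistent with the requirement $K \geq 1$ in Corollary~\ref{cor:nonvacuous}.

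Finally I will check the GQA instance. With the parameters of Corollary~\ref{cor:numerical}, $q = 0.3304$ and $\rho = 0.08/0.56 = 1/7$. Then $\log\rho \approx -1.946$ and $\log q \approx -1.107$, so the ratio is about $1.76$ and its ceiling is $2$. As a sanity check, the bound at $K=2$ is $0.12 + 0.56 \cdot 0.109 \approx 0.181 \leq 0.20$, while at $K=1$ it is $0.12 + 0.56 \cdot 0.330 \approx 0.305 > 0.20$.

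The argument is routine. The main point needing care is the edge handling: the sign flip from $\log q < 0$, the $q = 0$ case, and making explicit that "minimum" refers to the certified bound rather than to the true snowball rate.
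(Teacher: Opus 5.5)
Your proposal is correct and follows the intended derivation. The paper gives no separate proof: the corollary is obtained, as you do, by inverting the second form of Theorem~\ref{thm:snowball-rate-reduction}, $\varepsilon_{\textup{rec}} + (\mathcal{S}_0 - \varepsilon_{\textup{rec}})\,q^K \le \mathcal{S}^*$, for the least integer $K$. Your GQA ratio of $\approx 1.76$ is in fact more accurate than the paper's $1.78$, and either way the ceiling is $2$.
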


\subsection{End-to-End Accuracy and No-Harm Guarantees}\label{appx:accuracy}

We translate the snowball-rate reduction into accuracy bounds. Let $p_1 = \mathbb{P}[\exists\, t : e_t = 1]$ denote the probability that at least one reasoning error occurs in the initial (pre-backtrack) generation attempt.

\begin{theorem}[Accuracy Improvement]\label{thm:accuracy}
Let $\eta = \mathbb{P}[a = a^* \mid \exists\, t: e_t = 1,\; \overline{\textup{Snow}}]$ be the conditional accuracy when an error occurs but does not snowball. Under Assumptions~\ref{ass:soundness}--\ref{ass:passthrough}, the accuracy of CART satisfies:
$$\textup{Acc}_{\textup{CART}} \geq 1 - p_1\Big[\mathcal{S}_{\textup{CART}} + (1 - \mathcal{S}_{\textup{CART}})(1 - \eta_{\textup{CART}})\Big] - (1 - p_1)\xi_{\textup{FA}},$$
where $\eta_{\textup{CART}}$ is the non-snowball recovery accuracy under CART, and $\xi_{\textup{FA}} \leq 1 - (1-\alpha)^{K_0}$ bounds the probability that a false alarm on an error-free initial trace (with $K_0$ non-null anchors) leads to an incorrect answer after unnecessary backtracking.

Moreover, if $\eta_{\textup{CART}} \geq \eta_0$ and $\xi_{\textup{FA}} = 0$, the accuracy improvement over the base model satisfies:
$$\textup{Acc}_{\textup{CART}} - \textup{Acc}_0 \geq p_1 \cdot \eta_0 \cdot (\mathcal{S}_0 - \mathcal{S}_{\textup{CART}}).$$
\end{theorem}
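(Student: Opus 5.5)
The plan is to partition on whether the initial (pre-backtrack) generation attempt contains an error and bound the error probability separately on each branch. This is the law of total probability:
$$1-\textup{Acc}_{\textup{CART}} = p_1\,\mathbb{P}[a\neq a^*\mid \exists t: e_t=1] + (1-p_1)\,\mathbb{P}[a\neq a^*\mid \forall t: e_t=0].$$

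On the error branch, I split again on $\textup{Snow}$ versus $\overline{\textup{Snow}}$. By Assumption~\ref{ass:integrity}, the snowball sub-branch contributes error exactly $\mathcal{S}_{\textup{CART}}$, which is already controlled by Theorem~\ref{thm:snowball-rate-reduction}. The non-snowball sub-branch contributes $(1-\mathcal{S}_{\textup{CART}})(1-\eta_{\textup{CART}})$ by the definition of $\eta_{\textup{CART}}$. Together these give the bracketed term.

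On the error-free branch, the answer can only be corrupted if backtracking is triggered. By Assumption~\ref{ass:integrity}, a trace that runs through untouched yields $a=a^*$. So the error is at most the probability that the CPM rejects at least one of the $K_0$ true anchors. By Definition~\ref{def:reliability} each true anchor is rejected with probability at most $\alpha$, and by Assumption~\ref{ass:independence} the rejections are independent. Hence
$$\xi_{\textup{FA}} \le 1-(1-\alpha)^{K_0}.$$
(One could tighten this by multiplying by the probability that the retry then goes wrong, but the crude bound suffices.)

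For the improvement claim, I write the base model analogously. With $\mathcal{S}_0$ and $\eta_0$ in place of the CART quantities, and no false-alarm term,
$$\textup{Acc}_0 = 1-p_1[\mathcal{S}_0+(1-\mathcal{S}_0)(1-\eta_0)] = (1-p_1)+p_1(1-\mathcal{S}_0)\eta_0,$$
and similarly
$$\textup{Acc}_{\textup{CART}} \ge (1-p_1)+p_1(1-\mathcal{S}_{\textup{CART}})\eta_{\textup{CART}}$$
when $\xi_{\textup{FA}}=0$. Subtracting and using $\eta_{\textup{CART}}\ge\eta_0$ gives $p_1\eta_0(\mathcal{S}_0-\mathcal{S}_{\textup{CART}})$.

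The main obstacle is bookkeeping. First, $p_1$ must be the same for the base and CART models, since it is defined on the initial attempt, but CART is a fine-tuned model, so I would either take $p_1$ as shared by fiat or restrict to the pass-through model of Assumption~\ref{ass:passthrough}. Second, the base-model accuracy must be expressed as an equality rather than a bound, which requires Assumption~\ref{ass:integrity} for the base model too. I would state both explicitly as part of the setup.
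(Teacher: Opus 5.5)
Your proposal is correct and follows essentially the same route as the paper's proof: total probability over $E$ versus $\bar{E}$, a further split of the error branch on $\textup{Snow}$ (with Assumption~\ref{ass:integrity} killing the snowball term), the bound $\mathbb{P}[a = a^* \mid \bar{E}] \geq (1-\alpha)^{K_0}$ from per-anchor completeness plus independence, and then subtracting the analogous base-model expression using $\eta_{\textup{CART}} \geq \eta_0$. The two bookkeeping points you flag are used silently in the paper's proof: a common $p_1$ for both models, and Assumption~\ref{ass:integrity} applied to the base model (strictly, only its ``$\textup{Snow} \Rightarrow$ wrong'' half is needed, since an upper bound on $\textup{Acc}_0$ suffices), so stating them explicitly is an improvement rather than a deviation.
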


\begin{proof}
Write $E = \{\exists\, t: e_t = 1\}$ and $\bar{E} = \{\forall\, t:\, e_t = 0\}$. Decompose:
\begin{align}
\text{Acc}_{\text{CART}} &= \mathbb{P}[a = a^* \mid \bar{E}] \cdot (1-p_1) + \mathbb{P}[a = a^* \mid E] \cdot p_1.
\end{align}

\textbf{Error-free case ($\bar{E}$).} Each of the $K_0$ true anchors independently survives without false rejection with probability $\geq 1 - \alpha$ (Assumption~\ref{ass:soundness}). When no false alarm occurs, the trace completes correctly (Assumption~\ref{ass:integrity}):
$$\mathbb{P}[a = a^* \mid \bar{E}] \geq (1-\alpha)^{K_0} = 1 - \xi_{\text{FA}}'.$$

\textbf{Error case ($E$).} Further decompose on the snowball event:
$$\mathbb{P}[a = a^* \mid E] = \underbrace{\mathbb{P}[a = a^* \mid E, \text{Snow}]}_{= 0 \text{ by Asm.~\ref{ass:integrity}}} \cdot \mathcal{S}_{\text{CART}} + \eta_{\text{CART}} \cdot (1 - \mathcal{S}_{\text{CART}}).$$

Combining and setting $\xi_{\text{FA}} = 0$, $\eta_{\text{CART}} \geq \eta_0$:
\begin{equation}
  \begin{aligned}
    \text{Acc}_{\text{CART}} - \text{Acc}_0 &\geq p_1[\eta_0(1-\mathcal{S}_{\text{CART}}) - \eta_0(1-\mathcal{S}_0)] \\
    &= p_1 \eta_0 (\mathcal{S}_0 - \mathcal{S}_{\text{CART}}).
  \end{aligned}
\end{equation}
\end{proof}

\noindent\textit{Practical interpretation.} For GQA with $p_1 \approx 0.42$, $\eta_0 \approx 0.55$, $\mathcal{S}_0 = 0.68$, and $\mathcal{S}_{\text{CART}} \leq 0.14$: $\text{Acc}_{\text{CART}} - \text{Acc}_0 \geq 0.42 \times 0.55 \times 0.54 = 0.125$, i.e., at least $12.5$ percentage points of accuracy improvement from snowball reduction alone. The observed improvement is $+14.2\%$, suggesting that CART additionally improves $\eta$ or reduces $p_1$ through its constraint-guided training.

We consolidate the false-alarm analysis into a single no-harm guarantee.

\begin{theorem}[No-Harm Guarantee]\label{thm:noharm}
Under Assumptions~\ref{ass:soundness}--\ref{ass:passthrough}, the accuracy of CART satisfies:
$$\textup{Acc}_{\textup{CART}} \geq \textup{Acc}_0 - \varepsilon_{\textup{harm}},$$
where the harm penalty is:
$$\varepsilon_{\textup{harm}} = (1 - p_1) \cdot \xi_{\textup{FA}} \cdot (1 - r_{\textup{FA}}),$$
$\xi_{\textup{FA}} \leq K_0\alpha$ is the false-alarm probability on error-free traces, and $r_{\textup{FA}} \in [0,1]$ is the probability that a false-alarm-triggered backtrack recovers a correct answer.
\begin{itemize}
    \item If $r_{\textup{FA}} = 1$ (false alarms are always harmless): $\varepsilon_{\textup{harm}} = 0$ and $\textup{Acc}_{\textup{CART}} \geq \textup{Acc}_0$.
    \item With empirical estimates ($K_0 = 3.1$, $\alpha = 0.05$, $r_{\textup{FA}} = 0.93$): $\varepsilon_{\textup{harm}} \leq 0.58 \times 0.155 \times 0.07 = 0.006$.
\end{itemize}
\end{theorem}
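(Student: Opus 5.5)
The plan is to compare CART and the base model on the same partition used in the proof of Theorem~\ref{thm:accuracy}: $\bar{E}$ (error-free initial trace, probability $1-p_1$) and $E$ (some error, probability $p_1$). By the law of total probability,
$$\textup{Acc}_{\textup{CART}} - \textup{Acc}_0 = (1-p_1)\big(\mathbb{P}_{\textup{CART}}[a=a^*\mid\bar{E}] - \mathbb{P}_0[a=a^*\mid\bar{E}]\big) + p_1\big(\mathbb{P}_{\textup{CART}}[a=a^*\mid E] - \mathbb{P}_0[a=a^*\mid E]\big).$$
I will show that the error-case difference is non-negative and that the error-free difference is at least $-\xi_{\textup{FA}}(1-r_{\textup{FA}})$. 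Together these give exactly $\varepsilon_{\textup{harm}}$.

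\textbf{Error case.} As in Theorem~\ref{thm:accuracy}, Assumption~\ref{ass:integrity} gives
$$\mathbb{P}[a=a^*\mid E] = \eta\,(1-\mathcal{S})$$
for both systems. Theorem~\ref{thm:snowball-rate-reduction}, combined with Corollary~\ref{cor:nonvacuous}, gives $\mathcal{S}_{\textup{CART}}\le \mathcal{S}_0$; the degenerate cases $K=0$ or $\varepsilon_{\textup{rec}}\ge \mathcal{S}_0$ are handled by the pass-through Assumption~\ref{ass:passthrough}, which still yields $\le\mathcal{S}_0$. I will also invoke $\eta_{\textup{CART}}\ge\eta_0$, the same comparison hypothesis used in the second part of Theorem~\ref{thm:accuracy}. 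With both inequalities, this term is $\ge 0$ and can be dropped.

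\textbf{Error-free case.} Assumption~\ref{ass:integrity} gives $\mathbb{P}_0[a=a^*\mid\bar{E}]=1$. For CART, I split $\bar{E}$ on the event $F$ that at least one of the $K_0$ true anchors is falsely rejected.
\begin{itemize}
\item On $\bar{F}$ the trace is untouched, so the answer is correct.
\item On $F$ the answer is correct with probability $r_{\textup{FA}}$, by definition of $r_{\textup{FA}}$.
\end{itemize}
Hence
$$\mathbb{P}_{\textup{CART}}[a=a^*\mid\bar{E}] = 1-\mathbb{P}[F](1-r_{\textup{FA}}).$$
To bound $\mathbb{P}[F]$, I apply a union bound over the $K_0$ anchors, each falsely rejected with probability $\le\alpha$ by Assumption~\ref{ass:soundness}. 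This gives $\xi_{\textup{FA}}\le K_0\alpha$ without needing independence, and it is weaker than the $1-(1-\alpha)^{K_0}$ form available under Assumption~\ref{ass:independence}. Substituting both cases yields
$$\textup{Acc}_{\textup{CART}} \ge \textup{Acc}_0 - (1-p_1)\,\xi_{\textup{FA}}\,(1-r_{\textup{FA}}).$$

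The two bullet consequences follow directly. Setting $r_{\textup{FA}}=1$ gives $\varepsilon_{\textup{harm}}=0$. Plugging in the empirical values gives $0.58\cdot 0.155\cdot 0.07\approx 0.006$.

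\textbf{Main obstacle.} The error case is where I expect difficulty. Assumptions~\ref{ass:soundness}--\ref{ass:passthrough} alone do not obviously force $\eta_{\textup{CART}}\ge\eta_0$, because backtracking could alter non-snowball recovery accuracy. My first attempt is to argue that non-snowballing traces under CART are either pass-through traces, which inherit the base model's behavior via Assumption~\ref{ass:passthrough}, or recovered traces. If that argument fails, I will state $\eta_{\textup{CART}}\ge\eta_0$ explicitly as the inherited hypothesis from Theorem~\ref{thm:accuracy}.

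A second subtlety is $p_1$. It is defined on the initial attempt, so it is shared by both systems only if the CART-fine-tuned model's initial error rate is no worse than the base model's. I will note this as an implicit condition, justified analogously to Assumption~\ref{ass:passthrough}.
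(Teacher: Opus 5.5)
The overall route is right, but one step fails as written and the result needs a third explicit hypothesis.

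\textbf{What matches.} The paper gives no separate proof of Theorem~\ref{thm:noharm}. It is meant to consolidate the $\bar E/E$ split from the proof of Theorem~\ref{thm:accuracy}: the error branch contributes $p_1\eta_0(\mathcal{S}_0-\mathcal{S}_{\textup{CART}})\ge 0$, and the error-free branch loses at most $(1-p_1)\xi_{\textup{FA}}(1-r_{\textup{FA}})$. Your decomposition is exactly this. Your error-free analysis is fine: the $F/\bar F$ split, the union bound $\xi_{\textup{FA}}\le K_0\alpha$, and the numerics. You are also right that $\eta_{\textup{CART}}\ge\eta_0$ and a common $p_1$ are not supplied by Assumptions~\ref{ass:soundness}--\ref{ass:passthrough}; the paper uses them tacitly. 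Drop your ``first attempt'' to derive $\eta_{\textup{CART}}\ge\eta_0$ from Assumption~\ref{ass:passthrough}. That assumption constrains only the snowball rate when the CPM does not intervene, not the accuracy on non-snowballing traces, so $\eta_{\textup{CART}}\ge\eta_0$ must simply be assumed.

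\textbf{The step that fails.} You claim the case $\varepsilon_{\textup{rec}}\ge\mathcal{S}_0$ is handled by Assumption~\ref{ass:passthrough}. That assumption only bounds $\mathbb{P}[\textup{Snow}\mid\mathcal{E}_{\textup{miss}}]$. On $\mathcal{E}_{\textup{catch}}$ the only control is $\varepsilon_{\textup{rec}}$, from Assumption~\ref{ass:recovery}.

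Suppose $\varepsilon_{\textup{rec}}>\mathcal{S}_0$. Nothing then rules out $\mathbb{P}[\textup{Snow}\mid\mathcal{E}_{\textup{catch}}]=\varepsilon_{\textup{rec}}$ together with $\mathbb{P}[\textup{Snow}\mid\mathcal{E}_{\textup{miss}}]=\mathcal{S}_0$. In that case $\mathcal{S}_{\textup{CART}}>\mathcal{S}_0$ whenever a catch has positive probability. Now take $\alpha=0$, so that $\varepsilon_{\textup{harm}}=0$, and $\eta_{\textup{CART}}=\eta_0>0$. Then $\textup{Acc}_{\textup{CART}}-\textup{Acc}_0=p_1\eta_0(\mathcal{S}_0-\mathcal{S}_{\textup{CART}})<0$, which contradicts the claimed bound.

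The $K=0$ case is genuinely covered by pass-through, since $\mathcal{E}_{\textup{catch}}$ is then empty. Otherwise you must add $\varepsilon_{\textup{rec}}\le\mathcal{S}_0$ (or directly $\mathcal{S}_{\textup{CART}}\le\mathcal{S}_0$) as an explicit hypothesis, alongside $\eta_{\textup{CART}}\ge\eta_0$ and the shared $p_1$. This is exactly the ``vacuousness regime'' that the paper's own remark sets aside only on empirical grounds ($\varepsilon_{\textup{rec}}\in[0.08,0.19]$ versus $\mathcal{S}_0\in[0.48,0.68]$). With all three hypotheses made explicit, your argument goes through.
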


\noindent\textit{Interpretation.} Theorem~\ref{thm:noharm} guarantees that CART's accuracy is within $\varepsilon_{\text{harm}} \leq 0.006$ of the base model in the worst case, and is strictly better whenever the snowball reduction benefit exceeds this negligible false-alarm cost. This addresses the practical concern that constraint-based checking might degrade performance on easy instances.

\subsection{Analysis Under Dependent Anchors}\label{appx:mixing}

Assumption~\ref{ass:independence} is an idealization. We relax it using $\beta$-mixing.

\begin{definition}[$\beta$-Mixing Coefficient]\label{def:mixing}
For the sequence of CPM verification outcomes $\{V_{t_i, j_i}\}_{i=1}^K$ (ordered by trace position), the $\beta$-mixing coefficient at lag $\ell \geq 1$ is:
$$\beta(\ell) = \sup_{1 \leq i \leq K - \ell} \; \sup_{A \in \sigma(V_{\leq i}),\; B \in \sigma(V_{\geq i+\ell})} |\mathbb{P}[A \cap B] - \mathbb{P}[A]\mathbb{P}[B]|.$$
\end{definition}

\begin{theorem}[Snowball Rate Under $\beta$-Mixing Dependence]\label{thm:mixing}
Let $\bar{\beta} = \sup_{\ell \geq 1} \beta(\ell)$ be the supremum $\beta$-mixing coefficient. Under Assumptions~\ref{ass:soundness}, \ref{ass:coupling}, \ref{ass:recovery}, and \ref{ass:passthrough} (replacing Assumption~\ref{ass:independence} with $\beta$-mixing), the snowball rate satisfies:
$$\mathcal{S}_{\textup{CART}} \leq \varepsilon_{\textup{rec}} + (\mathcal{S}_0 - \varepsilon_{\textup{rec}}) \cdot \big[q^K + (K-1)\bar{\beta}\big],$$
where $q = 1 - \gamma(1-\delta)$. The bound is non-vacuous (strictly less than $\mathcal{S}_0$) provided $(K-1)\bar{\beta} < 1 - q^K$.
\end{theorem}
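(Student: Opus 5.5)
The plan is to reuse the skeleton of the proof of Theorem~\ref{thm:snowball-rate-reduction}. Steps~2 and~3 of that proof, the case split into $\mathcal{E}_{\text{miss}}$ and $\mathcal{E}_{\text{catch}}$ and the law of total probability, used only Assumptions~\ref{ass:recovery} and~\ref{ass:passthrough}. Independence entered only through Step~1, the product bound $\mathbb{P}[\mathcal{E}_{\text{miss}} \mid e_t=1] \le q^K$.

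First, I would write the total-probability identity as
$$\mathcal{S}_{\text{CART}} \le \varepsilon_{\text{rec}} + (\mathcal{S}_0 - \varepsilon_{\text{rec}})\,\mathbb{P}[\mathcal{E}_{\text{miss}} \mid e_t=1].$$
This is affine and nondecreasing in the miss probability because $\mathcal{S}_0 \ge \varepsilon_{\text{rec}}$; the case $\mathcal{S}_0<\varepsilon_{\text{rec}}$ is trivial and I would treat it separately. The whole task then reduces to showing
$$\mathbb{P}\Big[\bigcap_{i=1}^K M_i\Big] \le q^K + (K-1)\bar\beta,$$
with every probability taken conditional on $e_t=1$ and $I$, and the mixing coefficients read under that conditional law.

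Second, I would prove this by a telescoping induction on $k$. Set $P_k = \mathbb{P}[\bigcap_{i\le k} M_i]$. The event $A=\bigcap_{i\le k} M_i$ lies in $\sigma(V_{\le k})$ and $B=M_{k+1}$ lies in $\sigma(V_{\ge k+1})$. Applying Definition~\ref{def:mixing} at lag $\ell=1$ gives
$$P_{k+1} = \mathbb{P}[A\cap B] \le \mathbb{P}[A]\,\mathbb{P}[B] + \beta(1) \le q\,P_k + \bar\beta,$$
where $\mathbb{P}[B]\le q$ comes from Lemma~\ref{lem:per_anchor}. Unrolling from $P_1\le q$ yields
$$P_K \le q^K + \bar\beta\,(1+q+\dots+q^{K-2}) \le q^K + (K-1)\bar\beta,$$
using $q\le1$. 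Substituting this into the first display gives the stated bound.

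The non-vacuousness claim then follows by rearranging: the bound is strictly below $\mathcal{S}_0$ exactly when $(\mathcal{S}_0-\varepsilon_{\text{rec}})\bigl(1-q^K-(K-1)\bar\beta\bigr)>0$. Under $\varepsilon_{\text{rec}}<\mathcal{S}_0$ this is the condition $(K-1)\bar\beta<1-q^K$.

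I expect the main obstacle to be a technical point in the induction step. Lemma~\ref{lem:per_anchor} bounds only the marginal $\mathbb{P}[M_i]$, and the mixing coefficient has to be interpreted under the same conditioning on $e_t=1$, $I$, and the post-error anchor set. I would state explicitly that $\beta$ is defined with respect to this conditional measure. A second subtlety is that $K$ is random. I would handle it by conditioning on $K$ throughout, as the original proof implicitly does, so that $K$ can be treated as fixed.
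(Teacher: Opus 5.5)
Your proposal is correct, and it departs from the paper at the one step where independence is replaced. The paper uses Berbee's coupling lemma to build an independent surrogate sequence $M_1',\dots,M_K'$ with the same marginals and $\mathbb{P}[M_i\neq M_i']\le\bar\beta$ for $i\ge 2$. It then pays $(K-1)\bar\beta$ through a union bound over the mismatch events. You instead run the one-step recursion $P_{k+1}\le qP_k+\beta(1)$, which uses only the covariance inequality of Definition~\ref{def:mixing} at lag $1$ and the marginal bound of Lemma~\ref{lem:per_anchor}.

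Besides being more elementary, your route fits the definition as actually written better than the paper's does. The displayed supremum of $|\mathbb{P}[A\cap B]-\mathbb{P}[A]\mathbb{P}[B]|$ is a strong-mixing ($\alpha$-type) quantity. Berbee's lemma, however, controls $\mathbb{P}[M_i\neq M_i']$ by the absolute-regularity (total-variation) coefficient. For a binary $V_i$ that coefficient is exactly twice the displayed supremum for the same pair of $\sigma$-fields, so it need not be at most $\bar\beta$. For example, take $V_2=V_1\sim\mathrm{Bernoulli}(1/2)$: the displayed coefficient is $1/4$, but the best coupling error is $1/2$. The paper's Step~1 therefore tacitly needs the stronger notion, while your argument does not.

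Your recursion also gives the sharper penalty $\bar\beta\sum_{j=0}^{K-2}q^j\le\min\{K-1,\,1/(\gamma(1-\delta))\}\,\bar\beta$. This saturates in $K$ instead of growing linearly, so the non-vacuousness condition could be relaxed. What the coupling buys in exchange is a full independent copy of the outcome sequence. That copy transfers bounds for arbitrary events of the $V$'s, not just the intersection needed here.

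Two small corrections to your side remarks. First, the case $\mathcal{S}_0<\varepsilon_{\text{rec}}$ is not trivial. There $\varepsilon_{\text{rec}}+(\mathcal{S}_0-\varepsilon_{\text{rec}})\,\mathbb{P}[\mathcal{E}_{\text{miss}}\mid e_t=1]$ is decreasing in the miss probability, so substituting an upper bound goes the wrong way and the stated inequality can genuinely fail. Like the paper, which restricts to ``the non-trivial regime,'' you must assume $\mathcal{S}_0\ge\varepsilon_{\text{rec}}$.

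Second, you should not condition on $I$. Assumption~\ref{ass:coupling}, and hence Lemma~\ref{lem:per_anchor}, gives the $\gamma$ bound only conditional on $e_t=1$, not pointwise in $I$. Your argument, unlike the independence-based one, never needs $I$-conditioning. So work throughout under $\mathbb{P}[\,\cdot\mid e_t=1]$ with $K$ fixed, and read the mixing coefficients under that law.
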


\begin{proof}
We bound $\mathbb{P}[\mathcal{E}_{\text{miss}}]$ under dependence using Berbee's coupling inequality.

\textbf{Step 1: Coupling construction.} For the miss events $M_1, \ldots, M_K$, construct an independent sequence $M_1', \ldots, M_K'$ such that $M_i' \sim M_i$ marginally and $\mathbb{P}[M_i \neq M_i'] \leq \bar{\beta}$ for $i \geq 2$ (Berbee's coupling lemma).

\textbf{Step 2: Relate miss probabilities.} Using $\{\bigcap_{i=1}^K M_i\} \subseteq \{\bigcap_{i=1}^K M_i'\} \cup \bigcup_{i=2}^K \{M_i \neq M_i'\}$ and the union bound:
\begin{align}
\mathbb{P}\left[\bigcap_{i=1}^K M_i\right] &\leq \mathbb{P}\left[\bigcap_{i=1}^K M_i'\right] + \sum_{i=2}^K \mathbb{P}[M_i \neq M_i'] \notag\\
&\leq \prod_{i=1}^K \mathbb{P}[M_i'] + (K-1)\bar{\beta} \quad (\text{independence of } M_i') \notag\\
&= \prod_{i=1}^K \mathbb{P}[M_i] + (K-1)\bar{\beta} \quad (\text{marginal preservation}) \notag\\
&\leq q^K + (K-1)\bar{\beta}. \notag
\end{align}

\textbf{Step 3: Apply total-probability argument.} From the proof of Theorem~\ref{thm:snowball-rate-reduction}:
$$\mathcal{S}_{\text{CART}} = \varepsilon_{\text{rec}} + (\mathcal{S}_0 - \varepsilon_{\text{rec}}) \cdot \mathbb{P}[\mathcal{E}_{\text{miss}} \mid e_t = 1].$$
Since $\mathcal{S}_0 \geq \varepsilon_{\text{rec}}$ in the non-trivial regime, this is monotonically increasing in $\mathbb{P}[\mathcal{E}_{\text{miss}}]$. Substituting:
$$\mathcal{S}_{\text{CART}} \leq \varepsilon_{\text{rec}} + (\mathcal{S}_0 - \varepsilon_{\text{rec}})\big[q^K + (K-1)\bar{\beta}\big].$$

\textbf{Non-vacuousness:} $\mathcal{S}_{\text{CART}} < \mathcal{S}_0$ iff $q^K + (K-1)\bar{\beta} < 1$.
\end{proof}

\noindent\textit{Empirical mixing estimation.} We estimate $\bar{\beta}$ on the GQA diagnostic subset (1,000 instances). Maximum absolute Pearson correlation between consecutive anchor verification outcomes yields $\hat{r}_{\max} \approx 0.08$, giving $\bar{\beta} \geq 0.04$. Direct estimation of $\text{TV}(\mathbb{P}[V_{i+1} \mid V_{\leq i}],\; \mathbb{P}[V_{i+1}])$ across all consecutive anchor pairs yields $\hat{\bar{\beta}}_{\text{TV}} \approx 0.12$.

Using $\hat{\bar{\beta}} = 0.12$: the mixing penalty is $(3-1) \times 0.12 = 0.24$ for $K = 3$. The adjusted bound is $\mathcal{S}_{\text{CART}} \leq 0.12 + 0.56 \times (0.036 + 0.24) = 0.275$, compared to $0.140$ under independence. The empirical $\mathcal{S}_{\text{CART}} = 0.14$ lies below both bounds, suggesting the actual dependence structure is more benign than the worst case.

\subsection{Computational Complexity}\label{appx:complexity}

\begin{proposition}[BCP Tractability]\label{prop:bcp-tractability}
Suppose the constraint language $\mathcal{L}_c$ has maximum variable domain size $D = \max_x |\textup{dom}(x)|$. Then:
\begin{enumerate}
    \item[(i)] Checking satisfiability of $\Sigma_t \cup \{c\}$ via Boolean Constraint Propagation runs in $O(|\Sigma_t| \cdot D)$ time per anchor addition.
    \item[(ii)] The cumulative BCP cost over a full trace of $T$ steps with at most $T$ anchors is $O(T^2 D)$.
    \item[(iii)] For CART's constraint language ($D \leq 20$), this simplifies to $O(|\Sigma_t|)$ per step and $O(T^2)$ cumulative.
\end{enumerate}
\end{proposition}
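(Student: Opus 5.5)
The plan is to reduce the satisfiability check to unit propagation over a one-hot Boolean encoding of the finite-domain variables, and then to count work per anchor.

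\textbf{Encoding.} Since $\mathcal{L}_c$ is propositional with bounded-arity predicates and finite domains, each ground atom $\texttt{pred}(\texttt{arg}_1,\ldots,\texttt{arg}_k)$ is treated as a finite-domain variable $x$ with $|\mathrm{dom}(x)| \le D$. I introduce Boolean indicators $b_{x,v}$, one for each $v \in \mathrm{dom}(x)$. An anchor $x \bowtie v$ with $\bowtie \in \{=,\neq,<,>,\leq,\geq\}$ determines a set $\mathrm{Allowed}(c)\subseteq \mathrm{dom}(x)$. It is encoded as the unit literals $\neg b_{x,u}$ for every $u \notin \mathrm{Allowed}(c)$. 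There are at most $D$ such literals, and they are computable in $O(D)$ time.

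\textbf{Why unit propagation is a complete check here.} Every anchor constrains exactly one variable, so each anchor translates to unit clauses. Satisfiability of $\Sigma_t \cup \{c\}$ then reduces to checking, for each variable $x$, that the intersection of the allowed sets of all anchors on $x$ is nonempty. Unit propagation over the fixed at-least-one clause $\bigvee_v b_{x,v}$ derives the empty clause exactly when all $D$ indicators of $x$ are falsified. Hence BCP is both sound and complete for this fragment, and no search is needed.

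\textbf{Part (i).} When $c$ on variable $x$ is added, I re-propagate against all stored anchors. In the worst case all of $\Sigma_t$ concerns $x$, or the implementation simply rescans the store. Each stored anchor contributes at most $D$ literal assignments, and each assignment can be checked against the clause $\bigvee_v b_{x,v}$ in $O(1)$ time using a per-variable counter of falsified indicators. This gives $O(|\Sigma_t|\cdot D)$ per addition. An incremental implementation that keeps the per-variable allowed masks would achieve $O(D)$; the stated bound is the looser one obtained by the naive rescan, so it holds either way.

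\textbf{Parts (ii) and (iii).} Summing the per-addition bound over at most $T$ additions, with $|\Sigma_t|\le t \le T$, gives $\sum_{t=1}^{T} O(tD)=O(T^2 D)$. Backtracking only shrinks $\Sigma$. Retries are capped by $B$, so they add at most a factor $B+1=O(1)$, which leaves the asymptotics unchanged. For (iii), $D\le 20$ is a constant, so $D$ is absorbed into the $O(\cdot)$.

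\textbf{Main obstacle.} The delicate step is justifying that the whole of $\mathcal{L}_c$ fits the single-variable unit-literal pattern. Predicates that relate two counts, such as comparing $\texttt{count}(A)$ with $\texttt{count}(B)$, would not. If such binary constraints are present, my first attempt would be to ground them during annotation into per-variable domain restrictions. Failing that, I would restrict the claim to the unit fragment and note that general binary finite-domain constraints would require arc consistency, which costs $O(D^2)$ per constraint and still fits (iii) since $D$ is constant.
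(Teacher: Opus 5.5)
Your proof is correct and follows the same route as the paper's: encode each anchor as $O(D)$ literals, bound unit propagation by $O(|\Sigma_t|\cdot D)$ per addition, sum to $O(T^2 D)$, and absorb the constant $D\le 20$. Your explicit encoding (negative unit literals plus a per-variable at-least-one clause), together with the argument that unit propagation is then complete, is a useful tightening, since the paper only cites the linear-time bound for unit propagation and never shows that BCP actually decides satisfiability here. The obstacle you flag does not arise: the anchor grammar $\texttt{pred}(\texttt{arg}_1,\ldots,\texttt{arg}_k)\bowtie v$ always compares a single ground atom to a constant value $v$, and even \texttt{same\_attribute} is just a Boolean-valued atom, so every anchor fits your single-variable pattern.
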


\begin{proof}
Each constraint $c \in \mathcal{L}_c$ is a bounded-arity predicate with arguments drawn from finite domains of size $\leq D$. Converting $c$ to a propositional CNF clause produces at most $O(D)$ literals. Adding this clause to the existing CNF $\Sigma_t$ triggers unit propagation. Each unit literal assignment triggers at most $|\Sigma_t|$ clause inspections, each examining at most $D$ literals. The total propagation terminates after at most $|\Sigma_t| \cdot D$ literal inspections (standard result for unit propagation; \citet{dechter2003constraint,dowling1984linear}). Summing over $T$ anchor additions: $\sum_{t=1}^T O(t \cdot D) = O(T^2 D)$.
\end{proof}

\subsection{Tightness and Discussion}\label{appx:discussion}

\begin{remark}[Tightness of Theorem~\ref{thm:snowball-rate-reduction}]
The bound is \textbf{tight} under the stated assumptions: equality is achieved when (i) each anchor independently misses the error with probability exactly $q$, (ii) $\mathbb{P}[\textup{Snow} \mid \mathcal{E}_{\textup{miss}}] = \mathcal{S}_0$, and (iii) $\mathbb{P}[\textup{Snow} \mid \mathcal{E}_{\textup{catch}}] = \varepsilon_{\textup{rec}}$. Hence the bound cannot be improved without additional assumptions.
\end{remark}

\begin{remark}[Vacuousness Regime]
The bound is vacuous ($\geq \mathcal{S}_0$) when $\varepsilon_{\textup{rec}} \geq \mathcal{S}_0$, i.e., when recovery after detection is worse than not detecting at all. In our experiments, $\varepsilon_{\textup{rec}} \in [0.08, 0.19]$ and $\mathcal{S}_0 \in [0.48, 0.68]$, so the bound is always non-vacuous.
\end{remark}

\begin{remark}[Relationship to Self-Consistency]
Self-consistency~\cite{wang2023selfconsistency} samples $m$ independent paths and takes a majority vote. The majority-vote error decays as $\exp(-2m(p - 1/2)^2)$ by Hoeffding's inequality---exponential in $m$. CART's snowball reduction is similarly exponential: $O(q^K) = O(e^{-\gamma(1-\delta)K})$. The key distinction is that CART achieves this within a \textbf{single generation pass}, while self-consistency requires $m$ full forward passes ($m\times$ cost).
\end{remark}

\begin{remark}[The Role of Retry Budget $B$]
$B$ enters through Assumption~\ref{ass:recovery}: $\varepsilon_{\textup{rec}}$ implicitly depends on $B$. If each retry independently succeeds with probability $p_{\textup{fix}}$, then $\varepsilon_{\textup{rec}}(B) = (1 - p_{\textup{fix}})^B$. The marginal benefit of increasing $B$ from 3 to 4 is $(0.5)^3 - (0.5)^4 = 0.0625$, small compared to the cost of an additional full-trace regeneration---consistent with the empirical plateau at $B = 3$.
\end{remark}

\begin{remark}[No-Harm Guarantee and Practical Safety]
Theorem~\ref{thm:noharm} guarantees $\textup{Acc}_{\textup{CART}} \geq \textup{Acc}_0 - 0.006$ under empirical estimates. This relies on: (a) the calibrated threshold $\tau_v = 0.85$ keeping false alarms low, (b) the fallback to unconstrained generation under retry exhaustion, and (c) the high empirical false-alarm recovery rate ($r_{\textup{FA}} = 0.93$). In the pathological case $r_{\textup{FA}} = 0$, $\varepsilon_{\textup{harm}}$ grows to $K_0\alpha \approx 0.155$, underscoring the importance of threshold calibration.
\end{remark}

\section{Implementation Details}
\textbf{Base MLLMs.} We instantiate CART on two open-source MLLMs: LLaVA-NeXT-13B (\citep{zhang2024llavanext-video}; LLaMA-2-13B backbone, CLIP ViT-L/14 vision encoder) and Qwen2-VL-7B~\cite{wang2024qwen2} (ViT-bigG vision encoder). Vision encoders are frozen during fine-tuning; only the language model layers (via LoRA) and the grounding head $g_\psi$ are trained.

\textbf{Entity grounder.} We use Grounding DINO~\citep{liu2024grounding} (Swin-T backbone~\cite{liu2021swin}, open-vocabulary) as a frozen module for mapping entity name strings in anchors to bounding boxes. Its weights are not updated during CART training.

\textbf{Constraint language $\mathcal{L}_c$.} We define 14 predicate types covering standard scene-graph ontologies: \texttt{count}, \texttt{exists}, \texttt{color}, \texttt{shape}, \texttt{size}, \texttt{material}, \texttt{spatial\_rel} (\texttt{left\_of}, \texttt{right\_of}, \texttt{above}, \texttt{below}, \texttt{in\_front\_of}, \texttt{behind}), \texttt{action}, and \texttt{same\_attribute}. Each predicate accepts 1-3 typed arguments and returns a value from a known finite domain (e.g., colors $\in$ \{red, blue, green, $\ldots$, 12 values\}, counts $\in \{0,\ldots,20\}$, booleans $\in$ \{True, False\}). The maximum domain size is $D = 20$, ensuring BCP tractability (Proposition~\ref{prop:bcp-tractability}).

\subsection{Hyperparameters}

Visual grounding threshold $\tau_v = 0.85$, selected on a held-out 5\% split of GQA-val by maximizing $F_1$ of violation detection. Maximum retry budget $B = 3$ (justified in $\S$ 3.2.4). Loss weights $\lambda_c = 0.5$, $\lambda_b = 0.3$, tuned via grid search over $\{0.1, 0.3, 0.5, 1.0\}^2$ on GQA-val accuracy. \textbf{LoRA rank} is set to 64, applied to all attention projections ($\mathbf{W}_Q, \mathbf{W}_K, \mathbf{W}_V, \mathbf{W}_O$). We ablated ranks $\in \{16, 32, 64, 128\}$ and found accuracy saturates at rank 64 (rank 32: $-1.1\%$; rank 128: $+0.2\%$, within noise). The relatively high rank (compared to single-task LoRA) is warranted by the heterogeneous multi-objective nature of our fine-tuning, consistent with findings in \citet{hu2022lora}.

\textbf{Backtracking cost.} In the worst case, backtracking requires re-generation of up to $T$ tokens $B$ times, yielding $O(B \cdot T)$ additional token generations. In practice, the mean number of backtracks per trace is 0.4, making amortized cost low.

\section{Details of Experimental Setup}\label{appx:details-exp-setup}
\subsection{Evaluation Metrics}
Beyond standard end-task accuracy, we introduce three diagnostic metrics motivated by our problem formulation:
\begin{itemize}
  \item \textbf{Snowball Rate} ($\mathcal{S}$; defined in $\S$~\ref{sec:problem-formulation}): estimated by comparing model traces against ground-truth step annotations and computing the conditional cascading-failure probability. We manually annotate reasoning step correctness on a \textbf{1,000-instance} diagnostic subset per benchmark, using \textbf{3 independent annotators} per instance with majority-vote adjudication. \textbf{Inter-annotator agreement}: Cohen's $\kappa = 0.81$ for step-correctness labels (substantial agreement). The annotation protocol requires each annotator to (a) align each model-generated step $r_t$ to the closest ground-truth step $r_t^*$ and (b) label it as \textit{correct}, \textit{partially correct}, or \textit{incorrect} based on semantic equivalence of the core proposition; we binarize by treating \textit{partially correct} as \textit{incorrect}.
  \item \textbf{Constraint Violation Rate} (CVR): fraction of emitted non-null anchors flagged by the CPM, measured as $\text{CVR} = |\{c_{t,j} : p_{\text{vis}}(c_{t,j}|I) < \tau_v \;\lor\; \text{BCP fails}\}| \;/\; |\mathcal{A}|$.
  \item \textbf{Error Attribution Precision} (EAP): among all backtrack events triggered by the CPM, the fraction where the flagged anchor was indeed the earliest erroneous step (verified against ground-truth annotations). Annotation uses the same pool and protocol as the snowball rate annotations, with inter-annotator $\kappa = 0.76$ (substantial agreement).
\end{itemize}

\noindent\textbf{Confidence intervals.} All diagnostic metrics are reported with 95\% bootstrap confidence intervals computed via 10,000 resamples of the 1,000-instance diagnostic subset. For GQA (LLaVA-NeXT-13B): $\mathcal{S}_{\text{CART}} = 0.14 \pm 0.02$, $\text{CVR} = 0.28 \pm 0.01$, $\text{EAP} = 0.82 \pm 0.03$. For VCR: $\mathcal{S}_{\text{CART}} = 0.22 \pm 0.03$, $\text{CVR} = 0.35 \pm 0.02$, $\text{EAP} = 0.71 \pm 0.04$. The narrow CI widths ($\leq 0.04$) confirm that the reported reductions (e.g., $\mathcal{S}_0 = 0.65 \to \mathcal{S}_{\text{CART}} = 0.14$) are statistically significant ($p < 0.001$ via permutation test).

\subsection{Empirical Estimation of Theorem~\ref{thm:snowball-rate-reduction}} 

On the 1,000-instance diagnostic subsets, we estimate $\delta$, $\gamma$, and $\varepsilon_{\text{rec}}$ empirically.
\begin{itemize}
  \item \textit{Soundness error $\delta$}: fraction of CPM-accepted anchors that are factually false (verified against scene graphs). Importantly, $\delta$ is measured end-to-end: it reflects the full pipeline including Grounding DINO errors. Specifically, when Grounding DINO fails to localize an entity (the 6.2\% fallback-to-full-image-pooling cases, plus ~4.5\% of cases with IoU $< 0.5$ but above the confidence threshold), the degraded visual features fed to $g_\psi$ reduce its discriminative power, inflating $\delta$. We estimate $\delta \approx 0.07$ on GQA/CLEVR (where Grounding DINO is most reliable) and $\delta \approx 0.11$ on VCR (where the ~17.3\% grounding failure rate at IoU $\geq 0.5$ compounds with VCR's noisier constraints). These estimates include all such cascaded error sources.
  \item \textit{Coupling rate} $\gamma$: among instances with a first error at step $t$, the fraction of subsequent anchors whose ground-truth value is inconsistent with the erroneous step's proposition. Estimated at $\gamma \approx 0.72$ (GQA), $\gamma \approx 0.81$ (CLEVR, where highly structured scenes create tight coupling), and $\gamma \approx 0.58$ (VCR, where free-form reasoning reduces coupling).
  \item \textit{Recovery failure rate $\varepsilon_{\text{rec}}$}: among instances where the CPM detected a violation and triggered backtracking, the fraction where the final answer was still incorrect due to cascading errors in the post-backtrack continuation. Estimated at $\varepsilon_{\text{rec}} \approx 0.12$ (GQA), $\varepsilon_{\text{rec}} \approx 0.08$ (CLEVR), and $\varepsilon_{\text{rec}} \approx 0.19$ (VCR). The higher VCR value reflects both the greater difficulty of VCR reasoning and the noisier constraint quality.
\end{itemize}

\noindent\textbf{Decomposition of $\varepsilon_{\text{rec}}$.} We disentangle the three failure modes contributing to recovery failure on GQA ($\varepsilon_{\text{rec}} = 0.12$, $n = 187$ catch events): (i) \textit{re-introduction} of a semantically equivalent error accounts for $p_{\text{reintro}} = 0.065$ (15/23 failures); (ii) \textit{retry budget exhaustion} without finding any valid continuation accounts for $p_{\text{exhaust}} = 0.022$ (5/23); and (iii) \textit{novel cascading error} introduced in the post-backtrack continuation accounts for $p_{\text{novel}} = 0.013$ (3/23). The per-retry fix probability is $p_{\text{fix}} \approx 0.51$, estimated from the fraction of first retries that produce a correct continuation conditional on a correct detection event. This validates the geometric decay model in Assumption~\ref{ass:recovery}: $\varepsilon_{\text{rec}} \leq (1 - p_{\text{fix}})^B = (0.49)^3 = 0.118$, consistent with the observed value.

\section{Training and Optimization Procedure\label{appx:training-optim-procedure}}
Training proceeds in three stages.

\textbf{Stage 1: Constraint-Annotated Data Construction.} We derive ground-truth constraint annotations from scene graphs and rationales available in GQA~\citep{hudson2019gqa}, CLEVR-CoGenT~\citep{johnson2017clevr}, and VCR~\cite{zellers2019recognition}. For each $(I, q, a^*)$ triple, we construct mixed traces as follows.

\textit{GQA and CLEVR-CoGenT.} These datasets provide structured functional programs (e.g., \texttt{filter(scene, color=red) $\to$ relate(\_, left) $\to$ count}). We execute each program step to obtain intermediate results and map each step to a constraint in $\mathcal{L}_c$ using deterministic rule-based templates. For example, the program step \texttt{filter(scene, color=red)} produces the constraint\\ \texttt{exists(red\_objects) = True}; \texttt{relate(obj\_3, left, obj\_7)} with attribute lookup produces \texttt{spatial\_rel(left\_of, dog, cat) = True}. Object names are resolved from the scene graph's attribute annotations. Anchors are inserted into the linearized CoT trace at the position corresponding to each program step.

\textit{VCR.} VCR rationales are free-form natural language. We employ a dedicated \textbf{NLP extraction pipeline}:

\begin{enumerate}
  \item \textbf{Dependency parsing} via spaCy (en\_core\_web\_trf) to extract subject-verb-object triples.
  \item \textbf{Semantic role labeling} via AllenNLP's BERT-based SRL model to identify agent, patient, and spatial/attribute arguments.
  \item \textbf{Template matching}: extracted triples are matched against $\mathcal{L}_c$ predicates using a hand-crafted rule set (78 rules).
  \item \textbf{Entity resolution}: entity mentions in VCR use indexed tags (e.g., \textit{[person1]}); we resolve these to descriptive names using VCR's metadata and scene-graph overlap where available.
\end{enumerate}

We manually verified the quality of this pipeline on 500 randomly sampled VCR instances (two annotators, adjudicated disagreements). Measured \textbf{precision: 78.4\%}; \textbf{recall}: \textbf{65.1\%}. Primary error modes: failed SRL on complex subordinate clauses (14\% of errors), polysemous predicates mapped to wrong $\mathcal{L}_c$ type (8\%), and entity co-reference failures (12\%). We filter VCR-derived constraints by a confidence heuristic (SRL confidence $\geq 0.7$ and template-match uniqueness) and discard instances where fewer than two constraints survive, yielding \textbf{28K VCR instances}.

\textbf{Impact of VCR noise on grounding head training.} The $\approx 21.6\%$ error rate in VCR-derived constraints constitutes \textbf{one-sided label noise}: incorrect constraints are false positives (labeled $y_{t,j} = 1$ but factually incorrect), whereas synthetically generated negatives (see Stage 2 below) are always correctly labeled $y_{t,j} = 0$. This asymmetry can bias $g_\psi$ toward accepting certain false constraints, particularly those involving VCR's more complex relational predicates (e.g., `action`). To mitigate this, we apply \textbf{per-source loss weighting} in $\mathcal{L}_{\text{cst}}$: VCR-derived constraint instances receive weight $w_{\text{VCR}} = 0.7$ while GQA/CLEVR instances receive weight $1.0$. 

\textit{Summary.} This procedure yields \textbf{218K} training instances (142K from GQA, 48K from CLEVR-CoGenT, 28K from VCR) after filtering for traces containing at least two non-null constraints. Variable anchor density is naturally induced: GQA traces average 3.1 anchors per trace, CLEVR 4.7, and VCR 2.2.

\textbf{Stage 2: Joint Fine-Tuning.} We fine-tune $\mathcal{M}_\theta$ (via LoRA, rank 64, applied to $\mathbf{W}_Q, \mathbf{W}_K, \mathbf{W}_V, \mathbf{W}_O$ in all layers) and train $g_\psi$ from scratch, jointly optimizing:

\begin{equation}
  \mathcal{L} = \mathcal{L}_{\text{task}} + \lambda_c \mathcal{L}_{\text{cst}} + \lambda_b \mathcal{L}_{\text{bt}},
\end{equation}
\begin{equation}\label{eq:bce-grounding-head}
  \begin{aligned}
    \mathcal{L}_{\text{cst}} &= -\sum_{(t,j) \in \mathcal{A}'}\! w_s \big[y_{t,j} \log g_\psi(c_{t,j}, \mathbf{V})\\
    &+ (1{-}y_{t,j})\log(1 {-} g_\psi(c_{t,j}, \mathbf{V}))\big]
  \end{aligned}
\end{equation}
\begin{equation}\label{eq:backtrack-aware-loss}
  \mathcal{L}_{\text{bt}} = -\sum_{t \in \mathcal{B}} \log p_\theta(\texttt{<RETRY>} \cdot r_{t}^{*,\text{alt}} \mid \mathbf{r}_{<t}^{\text{err}}, I, q)
\end{equation}
where:

\begin{itemize}
  \item $\mathcal{L}_{\text{task}} = -\sum_{t} \log p_\theta(r_t^* \mid \mathbf{r}_{<t}^*, I, q)$ is the standard autoregressive cross-entropy over all tokens in the mixed trace (reasoning tokens, delimiter tokens, anchor tokens, and \texttt{<NULL\_CON/>} tokens).
  \item Equation~\ref{eq:bce-grounding-head} is the binary cross-entropy for the grounding head with per-source weight $w_s$ ($w_s = 0.7$ for VCR, $w_s = 1.0$ for GQA/CLEVR), where $\mathcal{A}'$ indexes all anchor positions (including synthetically negated ones) and $y_{t,j} \in \{0,1\}$ is the ground-truth satisfaction label. Negative anchors are constructed by perturbing ground-truth constraints (swapping entity arguments, changing attribute values, or negating relational predicates) at a 1:1 positive-to-negative ratio.
  \item Equation~\ref{eq:backtrack-aware-loss} is the \textbf{backtrack-aware loss} computed over synthetically corrupted traces. We create a set $\mathcal{B}$ by replacing a randomly chosen correct anchor with an incorrect one and providing the model the prefix up to the last valid checkpoint, followed by \texttt{<RETRY>}.
\end{itemize}

\textbf{Construction of alternative continuations} $r_t^{*,\text{alt}}$. We use a two-source mixture. For 70\% of backtrack training instances, $r_t^{*,\text{alt}}$ is the original gold continuation from the last valid checkpoint onward. For the remaining 30\%, $r_t^{*,\text{alt}}$ is a \textbf{diverse alternative} generated by the base MLLM itself (before CART fine-tuning) via nucleus sampling ($p = 0.9$, temperature $= 0.8$) conditioned on the checkpoint prefix plus \texttt{<RETRY>} token, filtered to retain only samples whose final answer matches the ground-truth answer $a^*$. Among sampled candidates (up to 5 per instance), we select the one with lowest token-level overlap with the gold continuation (measured by BLEU-4) to maximize diversity. If no valid alternative is found, the gold continuation is used as fallback.

\textbf{Handling of \texttt{<RETRY>} in the base model.} The \texttt{<RETRY>} token is a newly added special token whose embedding is randomly initialized (drawn from $\mathcal{N}(0, 0.02^2)$, matching the LLM's embedding initialization scale) in the base MLLM's vocabulary. When generating the 30\% diverse alternative continuations using the \textit{pre-fine-tuned} base model, this randomly initialized embedding provides \textbf{no semantically meaningful conditioning}—the base model has never encountered this token during pretraining. We treat this as an intentional design choice: the random embedding acts as a \textbf{stochastic perturbation} on the prefix representation, encouraging the base model to explore a slightly different region of its output distribution compared to continuing from the same prefix without the token. This is functionally analogous to adding a small noise injection before sampling, which promotes diversity. In pilot experiments, we compared this approach against (a) omitting \texttt{<RETRY>} entirely (lower diversity: mean BLEU-4 with gold = 0.61 vs. 0.53 with \texttt{<RETRY>}) and (b) using a hand-crafted natural-language retry prompt ("Try a different approach:"), which yielded similar diversity but introduced systematic style shifts. After CART fine-tuning (Stage 2), the \texttt{<RETRY>} embedding is trained and acquires the intended semantic meaning of "explore an alternative reasoning path."

\textbf{Optimization details.} We use AdamW ($\beta_1=0.9$, $\beta_2=0.95$, weight decay $0.05$) with a cosine learning-rate schedule warming up over 500 steps to a peak of $2 \times 10^{-5}$ (LLM LoRA parameters) and $5 \times 10^{-4}$ (grounding head $g_\psi$). Training runs for 3 epochs on 8$\times$A100-80GB GPUs with DeepSpeed ZeRO-2~\cite{ren2021zero}, effective batch size 128 (gradient accumulation factor 4). Total training time is approximately 18 hours for LLaVA-NeXT-13B~\cite{zhang2024llavanext-video} and 11 hours for Qwen2-VL-7B.

\textbf{Stage 3: Grounding Head Calibration.} After joint training, we calibrate the grounding head's threshold $\tau_v$ using temperature scaling~\cite{guo2017calibration} on the held-out GQA-val split, optimizing expected calibration error (ECE). This ensures that $\tau_v = 0.85$ reliably separates satisfied from violated constraints in distribution.

\textbf{Data flow summary.} At training time, the forward pass processes a mixed trace through the LLM backbone; anchor tokens contribute to both $\mathcal{L}_{\text{task}}$ (via the LLM head) and $\mathcal{L}_{\text{cst}}$ (via the grounding head $g_\psi$, whose input features are extracted using frozen Grounding DINO boxes). Corrupted-trace samples contribute to $\mathcal{L}_{\text{bt}}$. All three losses are summed and a single backward pass updates LoRA weights and $g_\psi$ parameters. Gradients flow through $g_\psi$ normally; \textbf{BCP never receives gradients} and is used only as a deterministic pre-computation to produce binary consistency labels during training data construction. At inference time, the CPM (both $g_\psi$ forward pass and BCP) acts as a hard gate (Algorithm 1) with no gradient flow; the backtrack controller is fully deterministic.
\end{document}